\theoremstyle{definition}
\newtheorem{remark}{Remark}[section]
\title{DynamicVAE: Decoupling Reconstruction Error and Disentangled Representation Learning}
\author{Huajie Shao$^1$\footnotemark[1], ~ Haohong Lin$^2$\thanks{The first two authors contribute equally.}, ~Qinmin Yang $^2$, \AND Shuochao Yao$^1$, ~Han Zhao$^1$, ~Tarek Abdelzaher$^1$ \\
  $^1$Department of Computer Science at UIUC \\
  $^2$ College of Control Science and Engineering, Zhejiang University\\
  $^1$\texttt{\{hshao5, syao9, zaher\}@illinois.edu,} \texttt{han.zhao@cs.cmu.edu}\\
  $^2$\texttt{\{lhh2017, qmyang\}@zju.edu.cn}
}
\begin{document}

\maketitle

\setlength{\abovedisplayskip}{4pt}
\setlength{\belowdisplayskip}{4pt}

\begin{abstract}
This paper challenges the common assumption that the weight $\beta$, in $\beta$-VAE, should be larger than $1$ in order to effectively disentangle latent factors. We demonstrate that $\beta$-VAE, with $\beta < 1$, can not only attain good disentanglement but also significantly improve reconstruction accuracy via dynamic control. The paper \textit{removes the inherent trade-off} between reconstruction accuracy and disentanglement for $\beta$-VAE. Existing methods, such as $\beta$-VAE and FactorVAE, assign a large weight to the KL-divergence term in the objective function, leading to high reconstruction errors for the sake of better disentanglement. To mitigate this problem, a ControlVAE has recently been developed that dynamically tunes the KL-divergence weight in an attempt to \textit{control the trade-off} to more a favorable point. However, ControlVAE fails to eliminate the conflict between the need for a large $\beta$ (for disentanglement) and the need for a small $\beta$ (for smaller reconstruction error). Instead, we propose DynamicVAE that maintains a different $\beta$ at different stages of training, thereby \textit{decoupling disentanglement and reconstruction accuracy}. In order to evolve the weight, $\beta$, along a trajectory that enables such decoupling, DynamicVAE leverages a modified incremental PI (proportional-integral) controller, a variant of proportional-integral-derivative controller (PID) algorithm, and employs a moving average as well as a hybrid annealing method to evolve the value of KL-divergence smoothly in a tightly controlled fashion. We theoretically prove the stability of the proposed approach. Evaluation results on three benchmark datasets demonstrate that DynamicVAE significantly improves the reconstruction accuracy while achieving disentanglement comparable to the best of existing methods. The results verify that our method can separate disentangled representation learning and reconstruction, removing the inherent tension between the two. 


\end{abstract}

\section{Introduction}
\label{sec:introduction}
The goal of disentangled representation learning is to encode input data into a low-dimensional space that preserves information about the salient factors of variation, so that each dimension of the representation corresponds to a distinct factor in the data~\citep{bengio2013representation,locatello2020weakly,van2019disentangled}. Learning disentangled representations benefits a variety of downstream tasks~\citep{higgins2018towards,lake2017building,locatello2019disentangling,locatello2019fairness,denton2017unsupervised,mathieu2019disentangling}, including abstract visual reasoning~\citep{van2019disentangled}, zero-shot transfer learning~\citep{burgess2018understanding,lake2017building, higgins2016beta} and image generation~\citep{nie2020semi}, just to name a few. Due to its central importance in various downstream applications, there is abundant literature on learning disentangled representations. Roughly speaking, there are two lines of methods towards this goal. The first category includes supervised methods~\citep{chen2019weakly,locatello2019disentangling,shu2019weakly,bouchacourt2018multi,nie2020semi,yang2015weakly}, where external supervision (e.g., data generative factors) is available during training to guide the learning of disentangled representations. The second line of works focus on unsupervised methods~\citep{chen2016infogan,chen2018isolating,burgess2018understanding,kim2018disentangling,denton2017unsupervised,kumar2018variational,fraccaro2017disentangled}, which substantially relieve the needs to have external supervisions. For this reason, in this paper, we mainly focus on {\em unsupervised\/} disentangled representation learning. 

One major challenge of unsupervised disentanglement learning is that there exists a trade-off between reconstruction quality of the input signal and the degree of disentanglement in the latent representations. Let us take $\beta$-VAE and its variants~\citep{burgess2018understanding,chen2018isolating,higgins2016beta} as an example. These methods assign a large and fixed weight $\beta$ in the objective function to improve the disentanglement at the cost of reconstruction quality, which is highly correlated with accuracy in downstream tasks~\citep{van2019disentangled,locatello2020weakly}. In order to improve the reconstruction quality, researchers have proposed a dynamic learning approach, ControlVAE~\citep{shao2020controlvae}, to dynamically adjust the weight on the KL term in the VAE objective to better balance the quality of disentangled representation learning and reconstruction error. However, while ControlVAE allows better control of the trade-off between disentangled representation learning and reconstruction error, it {\em does not eliminate it\/}. One is still achieved at the expense of the other. The contribution of this paper, compared to the above state of the art, lies in demonstrating that with the proper design, the trade-off between disentangled representation learning and reconstruction error is completely eliminated. Both objectives can be attained at the same time in a decoupled fashion, without affecting each other. More specifically, we observe that if $\beta$ was kept high in the beginning of training then lowered later in the process, the two objectives are decoupled allowing each to be independently optimized. To the authors' knowledge, this work is the first to attain such decoupled optimization of both quality of disentanglement and reconstruction error.

\paragraph{Our Contributions:} 
In this paper, we propose a novel unsupervised disentangled representation learning method, dubbed as DynamicVAE, that turns the weight of $\beta$-VAE ($\beta>1$)~\citep{burgess2018understanding,higgins2016beta} into a small value ($\beta\leq1$) to achieve not only good disentanglement but also a high reconstruction accuracy via dynamic control. We summarize the main contributions of this paper as follows.
\begin{itemize}[noitemsep,topsep=0pt,leftmargin=0.25in]
\item We propose a new model, DynamicVAE, that leverages an incremental PI controller and moving average to {\em evolve the desired KL-divergence along a trajectory that enables decoupling of two objectives\/}: high-quality disentanglement and low reconstruction error.
%
%
\item We provide the theoretical conditions on parameters of the PI controller to guarantee stability of DynamicVAE.
\item We experimentally demonstrate that our approach turns the weight of $\beta$-VAE ($\beta>1$) to $\beta \leq 1$, achieving higher reconstruction quality yet comparable disentanglement compared to prior approaches (e.g., FactorVAE). Thus, our results  verify that the proposed method indeed decouples disentanglement and reconstruction accuracy without hurting each other's performance.
\end{itemize}

\section{Preliminaries}
\label{sec:preliminary}


\textbf{$\beta$-VAE and its Variants:}~
\label{sec:beta-vae}
$\beta$-VAE~\citep{higgins2017beta,chen2018isolating} is a popular unsupervised method for learning disentangled representations of the data generative factors~\citep{bengio2013representation}. Compared to the original VAE, $\beta$-VAE incorporates an extra hyperparameter $\beta (\beta >1 )$ as the weight of the KL term in the VAE objective:
\begin{equation}\label{eq:beta-vae}
\mathcal{L}_{\beta} = \mathbb{E}_{q_\phi(\mathbf{z|x)}} [\log p_\theta(\mathbf{x|z})] - \beta D_{KL} (q_\phi(\mathbf{z|x})~\|~p(\mathbf{z})).
\end{equation}
In order to discover more disentangled factors, in other variants, practitioners further add a constraint on the total information capacity, $C$, to control the capacity of the latent channels~\citep{burgess2018understanding} to transmit information. The constraint can be formulated as an optimization method:
\begin{equation}\label{eq:improved-vae}
\mathcal{L}_{\beta} = \mathbb{E}_{q_\phi(\mathbf{z|x)}} [\log p_\theta(\mathbf{x|z})] - \beta\cdot | D_{KL} (q_\phi(\mathbf{z|x})\| p(\mathbf{z})) -C |,
\end{equation}
where $\beta$ is a large and fixed hyperparameter. As a result, when the weight $\beta$ is large (e.g. 100), the algorithm tends to optimize the second term in \eqref{eq:improved-vae}, leading to much higher reconstruction error.

\textbf{PID Control Algorithm:}~
The PID is a simple yet effective control algorithm that can stabilize system output to a desired value via feedback control~\citep{stooke2020responsive,aastrom2006advanced}. The PID algorithm calculates an error, $e(t)$, between a set point (in this case, the desired KL-divergence) and the current value of the controlled variable (in this case, the actual KL-divergence), then applies a correction in a direction that reduces that error. The correction is the weighted sum of three terms, one {\em proportional\/} to the error (called P), one that is the {\em integral\/} of error (called I), and one that is the {\em derivative\/} of error (called D); thus, the term PID. The derivative term is not recommended for noisy systems, such as ours, reducing the algorithm to PI control.
The canonical form of a PI controller (applied to control $\beta(t)$) is the following:
\begin{equation}\label{eq:dis-pid}\small
\beta(t) = K_p e(t) + K_i \sum_{j=0}^t e(j),
\end{equation}
where $\beta(t)$ is the output of a controller, which (in our case) is the used $\beta$ during training at time $t$; $e(t)$ is the error between the output value and the desired value at time $t$; $K_p, K_i$ denote the coefficients for the P term and I term, respectively. Eq.~(\ref{eq:dis-pid}) may be rewritten in incremental form, as follows:
\begin{equation}\label{eq:inc-pid}
\beta(t)  = \Delta \beta(t) + \beta(t-1) ,
\end{equation}
where $\beta(0)$ can be set as needed (as we show later), and:
\begin{equation}
\Delta \beta(t) = K_p [e(t)-e(t-1)] + K_i e(t).
\end{equation}
This paper adopts a nonlinear incremental form of the PI controller, described later in Section~\ref{sec:model}.


\section{The DynamicVAE Algorithm}
\label{sec:model}
The goal of disentangled representation learning~\citep{burgess2018understanding} is to maximize the log likelihood and simultaneously stabilize the KL-divergence to a target value $C$. It can be formulated as the following constrained optimization problem:
\begin{equation}
\max_{\phi, \theta} \quad \mathbb{E}_{q_\phi(\mathbf{z}|\mathbf{x})} [\log p_\theta(\mathbf{x|z})],\qquad\textrm{s.t.} \quad D_{KL}(q_\phi(\mathbf{z}|\mathbf{x})~\|~p(\mathbf{z})) = C
\end{equation}
In order to achieve a good trade-off between disentanglement and reconstruction accuracy, we attempt to design a controller to dynamically adjust $\beta(t)$ in the following VAE objective to stabilize the KL-divergence to the desired value $C$:
\begin{equation}\label{eq:controlVAE-object}
\mathcal{L}_{d} = \mathbb{E}_{q_\phi(\mathbf{z}|\mathbf{x})} [\log p_\theta(\mathbf{x|z})] - \beta(t) D_{KL} (q_\phi(\mathbf{z}|\mathbf{x})~\|~p(\mathbf{z})).
\end{equation}
The contribution of DynamicVAE is to evolve $\beta(t)$ along a good trajectory to achieve decoupling between disentanglement and reconstruction error.

To reach this goal, we need to address the following two challenges:
\begin{enumerate}[noitemsep,topsep=0pt,leftmargin=0.3in]
\item $\beta(t)$ should dynamically change from a large value to small one. Specifically, at the beginning of training, $\beta(t)$ should be large enough to disentangle latent factors. After that, $\beta(t)$ is required to gradually drop to a small value to optimize the reconstruction.
\item $\beta(t)$ should not change too fast or oscillates too frequently. When $\beta(t)$ drops too fast or oscillates, it may cause KL-divergence to grow with a large value. Consequently, some latent factors may emerge earlier so that they can potentially be entangled with each other.
\end{enumerate}
In this paper, we propose methods to deal with these two challenges, summarized below.

\textbf{A non-linear incremental PI controller:}~
Fig.~\ref{fig:pid} (a) shows the designed non-linear PI controller that dynamically adjusts the weight $\beta(t)$ in the KL term of the $\beta$-VAE, based on the actual KL-divergence, $y_{KL}(t)$. Specifically, it first samples the output KL-divergence, $y_{KL}(t)$, at training step $t$. Then we use the difference $e(t)$ between the sampled KL-divergence at time $t$ with the desired value, $C$, as the feedback to PI controller to tune $\beta(t)$. The corresponding PI algorithm is denoted by
\begin{equation}\label{eq:pos_pid}
\beta(t) = K_p \sigma(-e(t)) - K_i \sum_{j=0}^t e(j),
\end{equation}
where $\sigma(.)$ is a sigmoid function; $K_p$, $K_i$ are positive hyper-parameters for P and I term respectively.
As mentioned earlier, we need a large $\beta(t)$ in the beginning to control the KL-divergence from a small value to a large target value so that the information can be transmitted through the latent channels per data sample. Accordingly, we adopt an incremental form of the PI controller in Eq.~\eqref{eq:pos_pid}, and initialize it to a large value:
\begin{equation}\label{eq:inc_PI}
\beta(t) = \Delta \beta(t) + \beta(t-1),
\end{equation}
where
\begin{equation}\label{eq:delta_beta}
\Delta \beta(t) = K_p [\sigma(-e(t)) - \sigma(-e(t-1))] - K_i e(t).
\end{equation}
and $\beta(0)$ is a large initial value.
When the PI controller is initialized to a large value $\beta(0)$, it can quickly produce a (small) KL-divergence during initial model training, preventing emergence of entangled factors.

\textbf{Moving average:}~
Since our model is trained with mini-batch data, it often contains noise that causes $\beta(t)$ to oscillate. In particular, when $\beta(t)$ plunges during training, it would cause KL-divergence to rise too quickly. This may lead to multiple latent factors coming out together to be entangled. To mitigate this issue, we adopt moving average to smooth the output KL-divergence as the feedback of PI controller below.
\begin{equation}
y(t) = \alpha_t y_{KL}(t) + \alpha_{t-1} y_{KL}(t-1) + \cdots + \alpha_{t-T} y_{KL}(t-T) = \sum_{i = t-T}^t \alpha_i y_{KL}(i),
\end{equation}
where $\alpha_i$ denotes weight and $T$ denotes the window size of past training steps.

\textbf{Hybrid annealing:}~
Control systems with step (input) function (i.e., those where the set point can change abruptly) often suffer from an overshoot problem. An overshoot is temporary overcompensation, where the controlled variable oscillates around the set point. In our case, it means that the actual KL-divergence may significantly (albeit temporarily) exceed the desired value, when set point is abruptly changed. This effect would cause some latent factors to come out earlier than expected, and be entangled, thereby producing poor-quality disentanglement. To address this problem, we develop a hybrid annealing method that changes the set point more gradually, as illustrated in Fig.~\ref{fig:hybrid} in Appendix. It combines step function with ramp function to smoothly increase the target KL-divergence in order to prevent overshoot and thus better disentangle latent factors one by one. 



The combination of the above three methods allows DynamicVAE to evolve $\beta(t)$ along a favorable trajectory to separate disentanglement learning and reconstruction optimization. We summarize the proposed incremental PI algorithm in Algorithm~\ref{alg:pid} in Appendix \ref{sec:pid}.

\begin{figure*}[!t]
\centering
\subfigure[PI controller]{\includegraphics[height=0.24\textwidth]{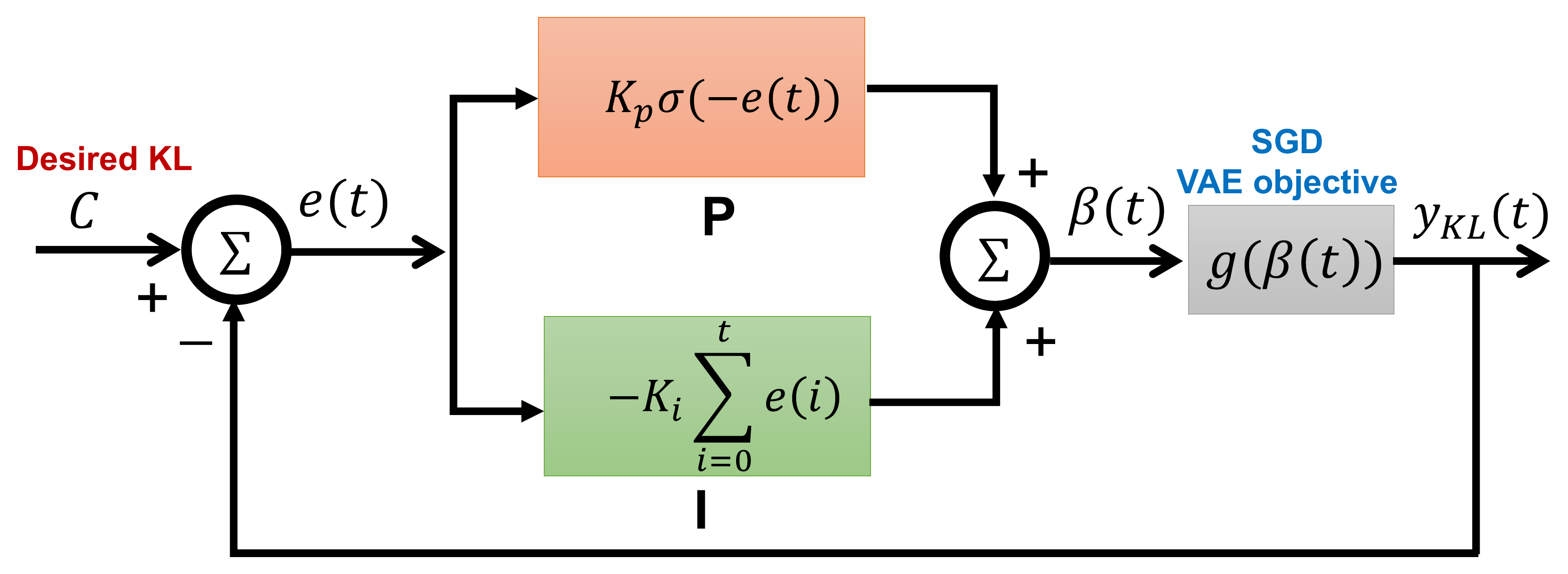}}
\hspace*{\fill}
\subfigure[KL varies with training steps]{\includegraphics[height=0.24\textwidth]{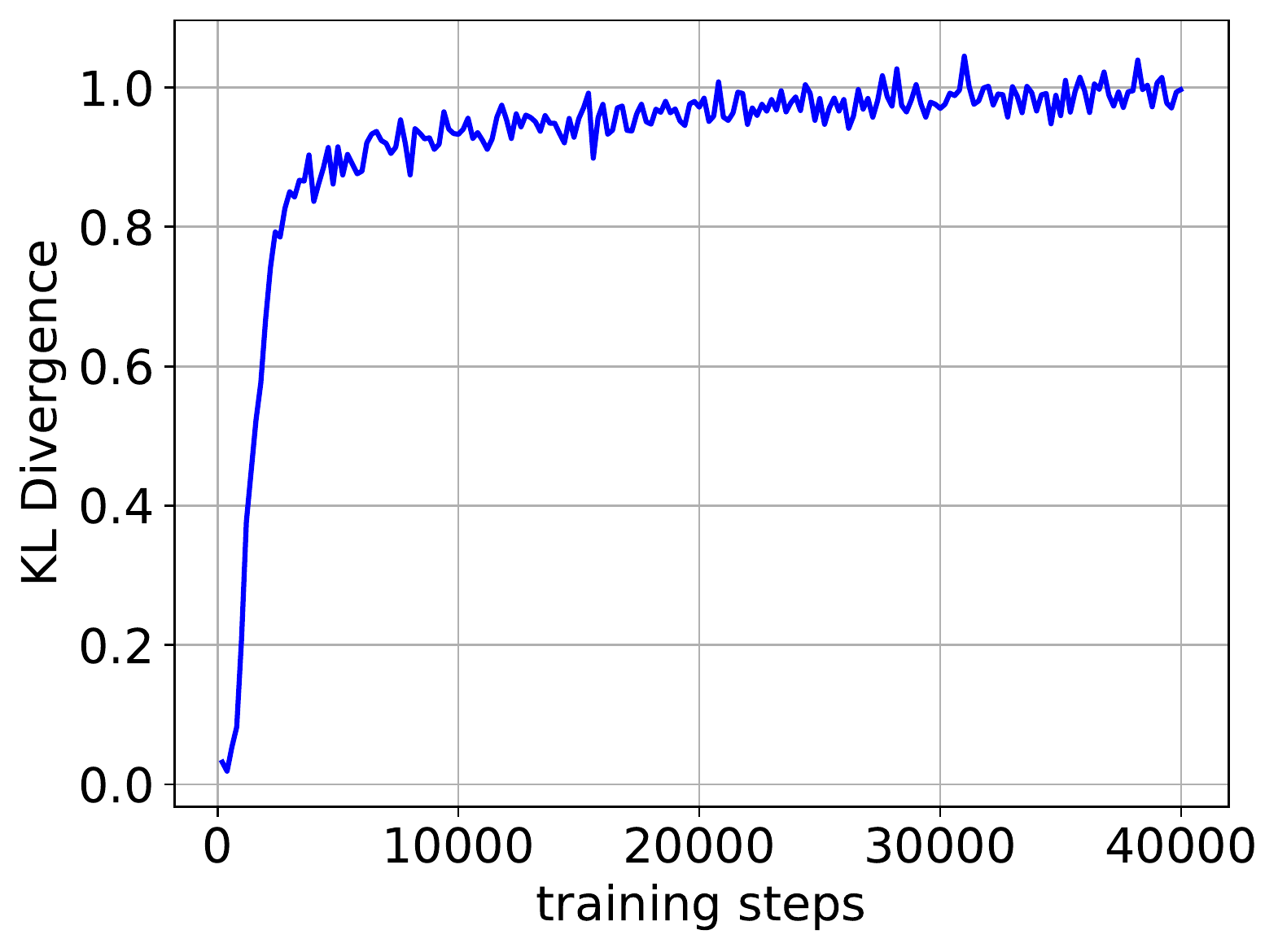}}
\vskip -0.05in
\caption{(a) PI control algorithm. (b) Time response of KL-divergence when $\beta=130$.}
\label{fig:pid}
\vskip -0.1in
\end{figure*}


\subsection{Stability Analysis of DynamicVAE}
We further theoretically analyze the stability of the proposed DynamicVAE. This work is the first to offer the necessary and sufficient conditions that control hyperparameters should satisfy in order to guarantee the stability of KL-divergence, when $\beta$ is manipulated dynamically during the training process of a (variant of) $\beta$-VAE.

To this end, our first step is to build the state space model for our control system. Throughout the paper, the state variable at training step $t$ is defined as
$x(t)= \beta(t)$.
Accordingly, the model of incremental PI controller can be written as:
\begin{equation}\label{eq:x_dis}
x(t+1)-x(t) =  K_p[\sigma(-e(t))-\sigma(-e(t-1))] - K_i e(t),
\end{equation}
where error $e(t)$, as shown in Fig.~\ref{fig:pid}(a), is given by $e(t)=C-y(t-1)$. Here $y(t)$ is a dynamic model about the time response of the output KL divergence, $y_{KL}(t)$. According to \citet{liu2019deep}, stochastic gradient descent (SGD) for optimizing an objective function can be described by a first-order dynamic model. Our experiment, as illustrated in Fig.~\ref{fig:pid}(b), also shows that the response $y(t)$ in the open loop system approximately meets a negative exponential function, which further verifies that our system is a first-order dynamic system. We hence use the first-order dynamic model to describe it below.
\begin{equation}
\frac{dy}{dt}+ay=ag(x),
\end{equation}
where $a$ is a positive hyperparameter to describe the dynamic property, and $g(x)$ is a mapping function between the actual KL-divergence and $\beta(t)$.
Since DynamicVAE is a discrete control system with sampling period $T_s=1$, the above first-order dynamic model can be reformulated as
\begin{equation}\label{eq:kl_discrete}
y(t)-y(t-1)+ay(t)=ag(x(t)) \Longrightarrow y(t)=\frac{1}{1+a}y(t-1) + \frac{a}{1+a}g(x(t)).
\end{equation}
Now let $x_1(t)=x(t), x_2(t)=y(t-1), x_3(t)=y(t-2)$, then Eqs.~\eqref{eq:x_dis} and \eqref{eq:kl_discrete} can be rewritten as the following state space equations.
\begin{equation}\label{eq:state_space}\small
\left\{
\begin{aligned}
& x_1(t+1)=x_1(t) - K_i[C - x_2(t)] + K_p[\sigma(x_2(t)-C) -\sigma(x_3(t)-C)] \triangleq f_1(x_1(t),x_2(t),x_3(t)) \\
& x_2(t+1)=\frac{a}{1+a}g(x_1(t))+\frac{1}{1+a}x_2(t) \triangleq f_2(x_1(t),x_2(t),x_3(t)) \\
& x_3(t+1)=x_2(t) \triangleq f_3(x_1(t),x_2(t),x_3(t)).
\end{aligned}\right.
\end{equation}
In order to analyze the stability of the above non-linear state space model, one commonly used method is to linearize it at an equilibrium point~\citep{hughes2015applications}. In this paper, we use the following equilibrium point:
\begin{equation}
x^* = (x_1^*, x_2^*, x_3^*)=(g^{-1}(C), C, C),
\end{equation}
where $g^{-1}(\cdot)$ denotes the inverse function and $x_2^* = x_3^*$. Next, we apply the first-order Taylor expansion to the above Eq.~\eqref{eq:state_space}, yielding
\begin{equation}
X(t+1)=AX(t),
\end{equation}
where 
\begin{equation}
X(t)=[x_1(t)-x_1^*, x_2(t)-x_2^*, x_3(t)-x_3^*]^T,
\end{equation}
and $A$ is the Jacobian matrix at equilibrium point $x^*$, as defined in Eq.~\eqref{eq:jacobian} in Appendix~\ref{app:stability}.  After this linearization, we can prove the stability of the proposed method as the modulus of eigenvalue $\lambda$ of $A$ is smaller than $1$, as described in the following theorem.
\begin{restatable}{theorem}{stabmain}\label{thm:stability}
Let $a > 0$ and assume $g'(x) < 0, \forall x>0$. Then DynamicVAE is stable at the equilibrium point $C$ if and only if the parameters of the PI controller, $K_i$ and $K_p$, satisfy the following conditions
\begin{equation}\label{eq:condition_stable}
 \left\{
  \begin{aligned}
    & K_p + K_i<-\frac{4(1 + a)}{a g'(x_1^*)}  \\
    & -0.5K_p^2 a g'(x_1^*)^2 - 2[K_p - 8K_i(1+a)] g'(x_1^*)+ 8(1 + a)>0 \\
    & K_i > 0, K_p > 0 \\
 \end{aligned}
  \right.
\end{equation}
\vspace*{-0.1in}
\end{restatable}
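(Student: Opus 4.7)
The plan is to use the standard linearization-stability theorem (Lyapunov's indirect method) for discrete-time dynamical systems: since $x^{*}=(g^{-1}(C),C,C)$ is an equilibrium of the nonlinear iteration~\eqref{eq:state_space}, local asymptotic stability of $x^{*}$ is equivalent to every eigenvalue of the Jacobian $A$ at $x^{*}$ lying strictly inside the open unit disk, i.e., $A$ being Schur. The ``iff'' in Theorem~\ref{thm:stability} therefore amounts to translating Schur stability of $A$ into algebraic inequalities on the controller gains $K_p$ and $K_i$.

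First I would assemble $A$ explicitly by differentiating $f_1,f_2,f_3$ at $x^{*}$ and using $\sigma'(0)=1/4$ (since $x_2^{*}-C=x_3^{*}-C=0$), which yields
\[
A=\begin{pmatrix} 1 & K_i+K_p/4 & -K_p/4 \\[2pt] \dfrac{a\,g'(x_1^{*})}{1+a} & \dfrac{1}{1+a} & 0 \\[2pt] 0 & 1 & 0 \end{pmatrix}.
\]
Expanding $\det(\lambda I-A)$ along the bottom row and multiplying by $4(1+a)$ gives a cubic $q(\lambda)=a_0\lambda^3+a_1\lambda^2+a_2\lambda+a_3$ with $a_0=4(1+a)$, $a_1=-4(2+a)$, $a_2=4-(4K_i+K_p)\,a\,g'(x_1^{*})$, and $a_3=K_p\,a\,g'(x_1^{*})$; the hypothesis $g'(x_1^{*})<0$ guarantees $a_0>0$ and $a_3<0$, which makes the subsequent Jury analysis clean.

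Next I would apply the Jury stability criterion for a cubic, which provides necessary and sufficient conditions for all roots of $q$ to lie in the open unit disk: (i) $q(1)>0$, (ii) $-q(-1)>0$, (iii) $|a_3|<a_0$, and (iv) $a_0^2-a_3^2>|a_0a_2-a_1a_3|$. A direct substitution shows (i) reduces to $-4K_i a\,g'(x_1^{*})>0$ and (ii) reduces to $16+8a-2(2K_i+K_p)a\,g'(x_1^{*})>0$; both are automatic under the positivity hypotheses $K_p,K_i,a>0$ and $g'(x_1^{*})<0$. Condition (iii) unpacks, after combining with the determinant sign information from (iv), into a bound of the form $K_p+K_i<-4(1+a)/(a\,g'(x_1^{*}))$, while condition (iv), after cancelling a common factor of $a$ and rearranging in $K_p$ and $K_i$, produces the quadratic inequality $-0.5\,K_p^{2}\,a\,g'(x_1^{*})^{2}-2[K_p-8K_i(1+a)]\,g'(x_1^{*})+8(1+a)>0$ stated in~\eqref{eq:condition_stable}.

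The hard part will be the algebraic bookkeeping in the last step: carefully tracking signs (because $g'(x_1^{*})<0$ flips several inequalities), determining which branch of the absolute value in Jury condition (iv) is active --- this amounts to pinning down the sign of $a_0a_2-a_1a_3$ under the positivity hypotheses --- and verifying that conditions (i) and (ii) are indeed redundant so that the theorem's list is minimal. Once that is sorted out, the ``iff'' follows from the linearization principle: strict Schur stability of $A$ is equivalent to local asymptotic stability of $x^{*}$ for the full nonlinear iteration, and the strict inequalities in~\eqref{eq:condition_stable} rule out eigenvalues on the unit circle, which is the only obstruction to applying the principle.
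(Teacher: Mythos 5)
Your plan follows the same skeleton as the paper's proof---linearize at $x^{*}=(g^{-1}(C),C,C)$, form the Jacobian $A$ (your matrix and the resulting characteristic-polynomial coefficients agree exactly with the paper's $K_1,\dots,K_5$, including the use of $\sigma'(0)=1/4$), and then convert Schur stability of $A$ into inequalities on $K_p,K_i$---but you close the argument with the Jury test applied directly to the cubic in $\lambda$, whereas the paper first applies the bilinear map $\xi=(\lambda-1)/(\lambda+1)$ and then runs Routh--Hurwitz on the transformed cubic $b_3\xi^3+b_2\xi^2+b_1\xi+b_0$. The two criteria are textbook-equivalent, and the correspondence is tight: your conditions (i) $q(1)>0$ and (ii) $-q(-1)>0$ are positive multiples of the paper's $b_0>0$ and $b_3>0$ (both automatic under $K_p,K_i,a>0$ and $g'<0$, as you and the paper both observe), and the branch $a_0^2-a_3^2>a_0a_2-a_1a_3$ of your condition (iv) equals $2(1+a)^2\,(b_1b_2-b_0b_3)>0$. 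The paper's route buys sign-definite inequalities with no absolute values to resolve; yours avoids the change of variable. The one substantive gap is the step you yourself flag as ``the hard part'': Jury (iii) alone only yields $K_p<-4(1+a)/(a\,g'(x_1^{*}))$, and recovering the stronger bound $K_p+K_i<-4(1+a)/(a\,g'(x_1^{*}))$ (the paper's $b_2>0$) requires combining (iii) with the second branch of (iv). You assert this works out but do not show it, so at that point the argument is a plan rather than a proof.

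One warning for when you do carry out the algebra: expanding $b_1b_2-b_0b_3$ (equivalently, the first branch of your Jury condition (iv)) produces a term proportional to $-2[K_p-4K_i(1+a)]\,g'(x_1^{*})$, not $-2[K_p-8K_i(1+a)]\,g'(x_1^{*})$ as written in the theorem and in the paper's Eq.~\eqref{eq:Routh}; a numerical spot check (e.g.\ $a=1$, $g'(x_1^{*})=-1$, $K_p=K_i=1$ gives $b_1b_2-b_0b_3=0.375$, while the paper's displayed expression evaluates to a negative number) confirms the coefficient $4$. So if your derivation does not literally reproduce the displayed quadratic inequality, the discrepancy traces to a constant-factor slip in the paper's expansion rather than to a flaw in your approach.
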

\textit{We provide the detailed proof in Appendix~\ref{app:stability}.}
\begin{remark}
The assumption of $g'(x) < 0, \forall x$ basically asks that the KL term in the objective to be a monotonously decreasing function of the coefficient $\beta(t)$, and we also further empirically corroborate its validity on two benchmark datasets as shown in Appendix~\ref{app:verify_stability}. In addition, we choose $K_p$ and $K_i$ that meet the above conditions~\eqref{eq:condition_stable} to verify the stability of DynamicVAE in Appendix~\ref{app:verify_stability}.
\end{remark}

\section{Experiments}
\label{sec:evaluate}
In this section we evaluate the performance of DynamicVAE and compare it against existing baselines, including ControlVAE~\citep{shao2020controlvae}, $\beta$-VAE$_H$~\citep{higgins2017beta}, $\beta$-VAE$_B$~\citep{burgess2018understanding}, FactorVAE~\citep{kim2018disentangling}, and VAE~\citep{kingma2013auto}. We conduct experiments on three benchmark datasets: dSprites~\citep{burgess2018understanding}, MNIST~\citep{chen2016infogan} and 3D Chairs~\citep{aubry2014seeing}. The detailed model configurations and hyperparameter settings are presented in Appendix~\ref{sec:configure}. Source code will be publicly available upon publication.


\subsection{Results and Analysis}
\textbf{Dsprites Dataset:}~
We first evaluate the performance of DynamicVAE on learning disentangled representations using \textit{dSprites}. Fig.~\ref{fig:disentangled} (a) and (b) illustrate the comparison of reconstruction error and the hyperparameter $\beta(t)$ (using $5$ random seeds) for different approaches. We can observe from Fig.~\ref{fig:disentangled} (a) that DynamicVAE (KL=20) has much lower reconstruction error (about $11.8$) than $\beta$-VAE and FactorVAE, and is comparable to the basic VAE and ControlVAE. This is because DynamicVAE dynamically adjusts the weight, $\beta(t)$, to balance the disentanglement and reconstruction. Specifically, DynamicVAE automatically assigns a large $\beta(t)$ at the beginning of training in order to obtain good disentanglement, and then its weight gradually drops to less than $1$ at the end of optimization, as shown in Fig.~\ref{fig:disentangled} (b). In contrast, $\beta$-VAE and FactorVAE have a large and fixed weight in the objective so that their optimization algorithms tend to optimize the KL-divergence term (total correlation term for FactorVAE), leading to higher reconstruction error. For ControlVAE, it can also dynamically tune $\beta(t)$ to control the value of KL-divergence, but its disentanglement performance degrades with the increase of KL-divergence (i.e., decrease of weight) as illustrated in Table~\ref{tab:rmig}. In addition, Fig.~\ref{fig:disentangled}(c) illustrates an example of per-factor KL-divergence in the latent code as the total information capacity (KL-divergence) increases from $0.5$ to $20$. We can see that DynamicVAE disentangles all the five data generative factors, starting from position ($x$ and $y$) to scale, followed by orientation and shape.

\begin{figure*}[tb]
\centering     
\subfigure[Reconstruction loss]{\includegraphics[width=0.325\textwidth]{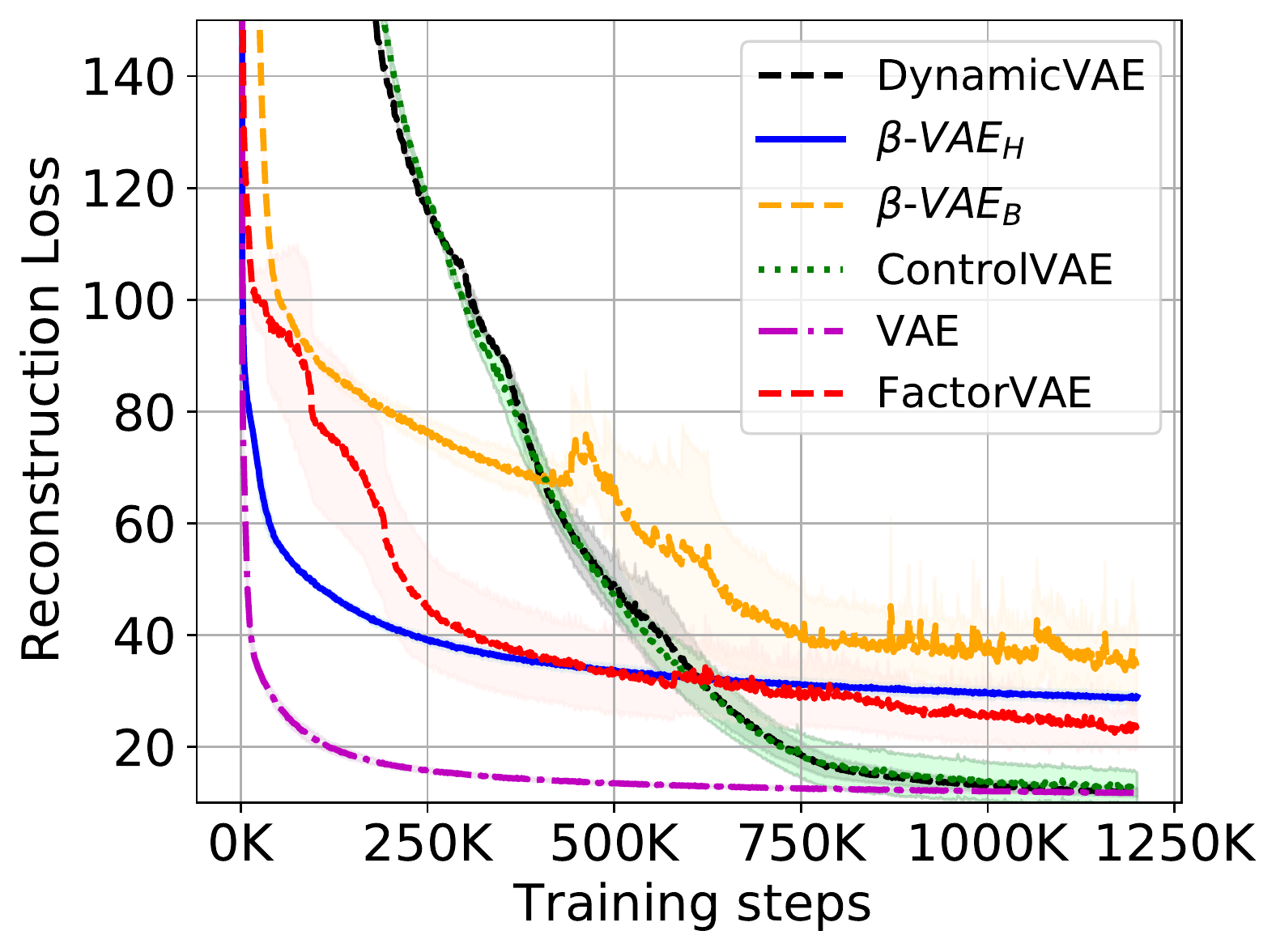}}
\subfigure[$\beta(t)$]{\includegraphics[width=0.325\textwidth]{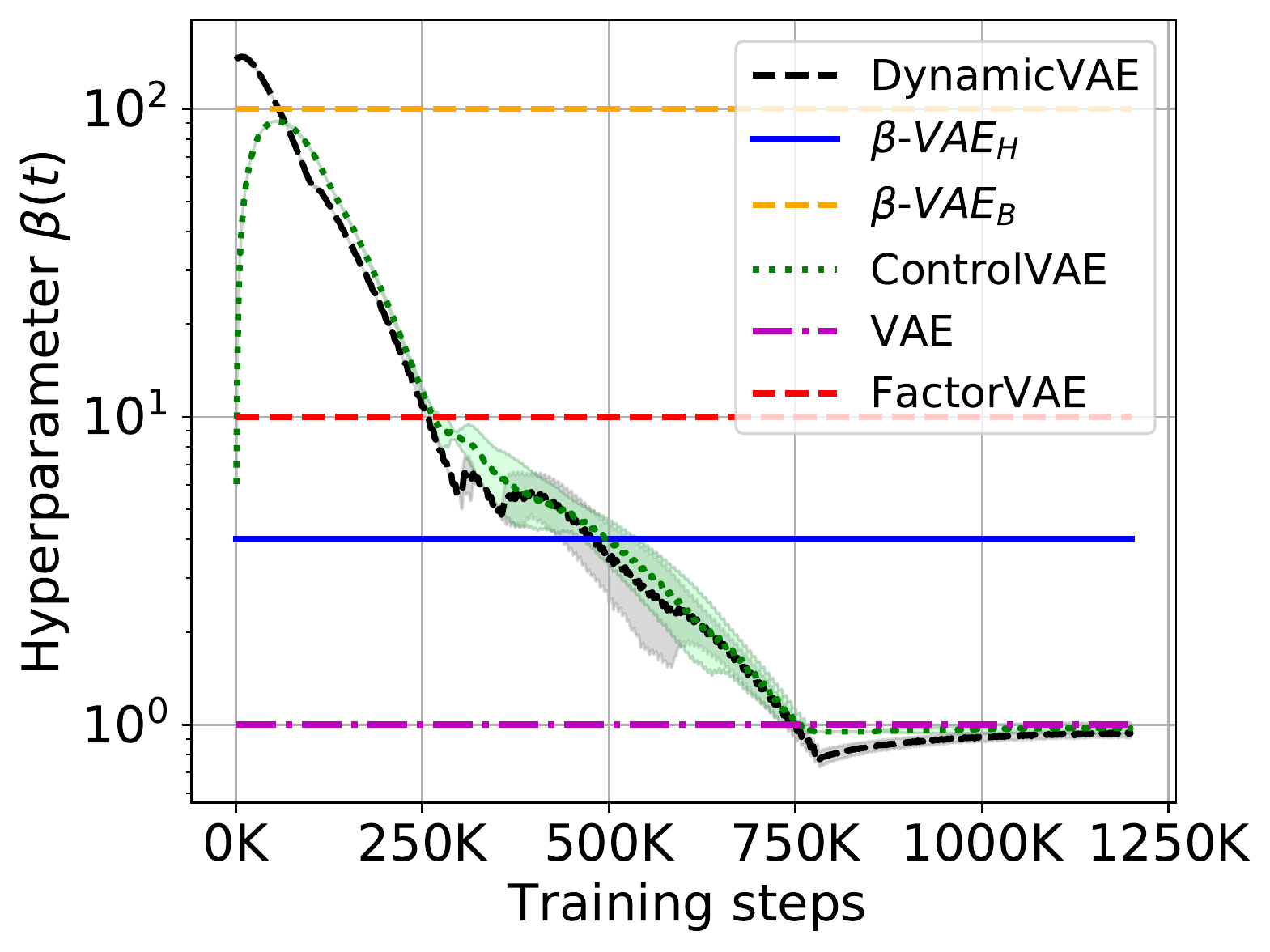}}
\subfigure[Disentangled factors]{\includegraphics[width=0.325\textwidth]{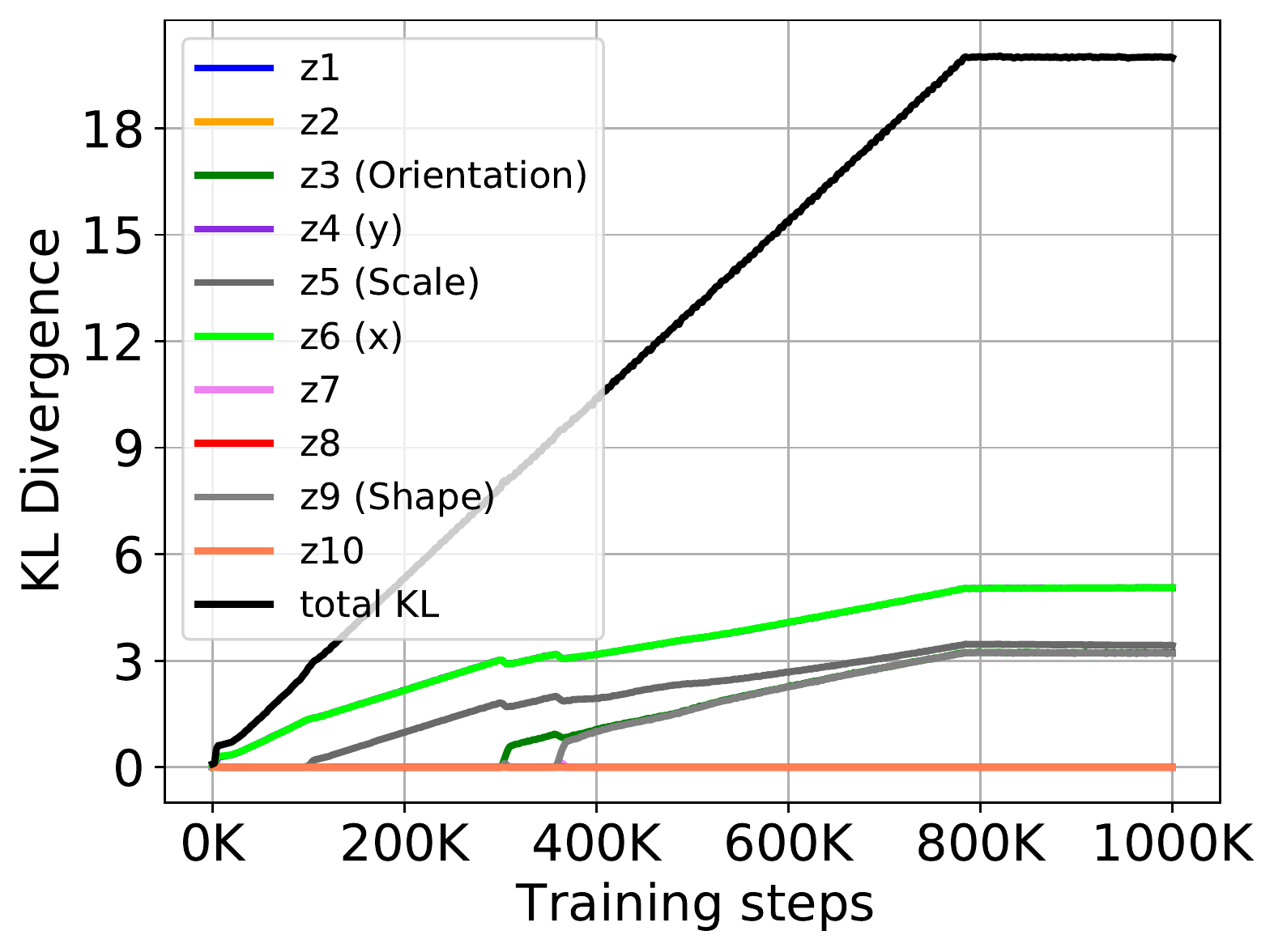}}
\vspace{-0.1in}
\caption{(a) shows the comparison of reconstruction error on dSprites using $5$ random seeds. DynamicVAE (KL=20) has comparable reconstruction errors as the basic VAE. (b) shows that DynamicVAE turns the weight of $\beta$-VAE into a small value less than $1$. (c) shows an example of DynamicVAE on disentangling factors as the total KL-divergence increases.}
\label{fig:disentangled}
\vspace*{-1.5em}
\end{figure*}

Next, we use a robust disentanglement metric, robust mutual information gap (RMIG)~\citep{do2020theory}, to evaluate the disentanglement of different methods. We can see from Table~\ref{tab:rmig} that DynamicVAE has a comparable RMIG score to the FactorVAE, but it has much lower reconstruction error as illustrated in Fig.~\ref{fig:disentangled}. Moreover, DynamicVAE has higher RMIG score but lower reconstruction error than $\beta$-VAE models. We also find that our method achieves much better disentanglement than ControlVAE for comparable reconstruction accuracy. Hence, DynamicVAE is able to improve the reconstruction quality yet obtain good disentanglement.

\begin{table*}[tb]\small
\caption{RMIG for different methods averaged over 5 random seeds. The higher the better.}
\label{tab:rmig}
\begin{center}
\begin{small}
\begin{tabular}{lllllll}
\toprule
Models/Metric & pos. $x$ & pos. $y$ & Shape & Scale & Orientation  & RMIG \\
\midrule
DynamicVAE (KL=20) & 0.7166 & 0.7179& 0.2004 & 0.6530 & 0.1024 & \textbf{0.4781} $\pm$ \textbf{0.0172} \\
ControlVAE (KL=20)  & 0.6802 &0.6597 & 0.0956 & 0.6040 & 0.1081  &  0.4295 $\pm$ 0.0865\\
FactorVAE ($\gamma=10$)& 0.7482 & 0.7276 & 0.1383 & 0.6262 & 0.1412 & 0.4763 $\pm$ 0.0513 \\
$\beta$-VAE$_B$ ($\gamma=100$) & 0.5666 & 0.5763  & 0.4353 & 0.3814 & 0.0631 & 0.4045 $\pm$ 0.0345\\
$\beta$-VAE$_H$ ($\beta=4$) & 0.1635 & 0.1047 & 0.1391 & 0.3958 & 0.0127 & 0.1632 $\pm$ 0.0626\\
VAE  & 0.0359 & 0.0243 & 0.0116 & 0.1507 &  0.0039 & 0.0452  $\pm$ 0.0326 \\
\bottomrule
\end{tabular}
\end{small}
\end{center}
\vspace*{-2.0em}
\end{table*}


Qualitatively, we also visualize the disentanglement results of different models in Fig.~\ref{fig:sprites_image}. We can observe that DynamicVAE disentangles all the five generative factors on dSprites. However, ControlVAE is not very effective to disentangle all the factors when its KL-divergence is set to a large value, such as $20$. Furthermore, $\beta$-VAE$_B$ ($\gamma=100$) disentangles four generative factors and mistakenly combines the scale and shape factors together (in the third row). The other methods do not perform well for disentanglement.
\begin{figure*}[tb]
\begin{center}
\centerline{\includegraphics[width=\textwidth]{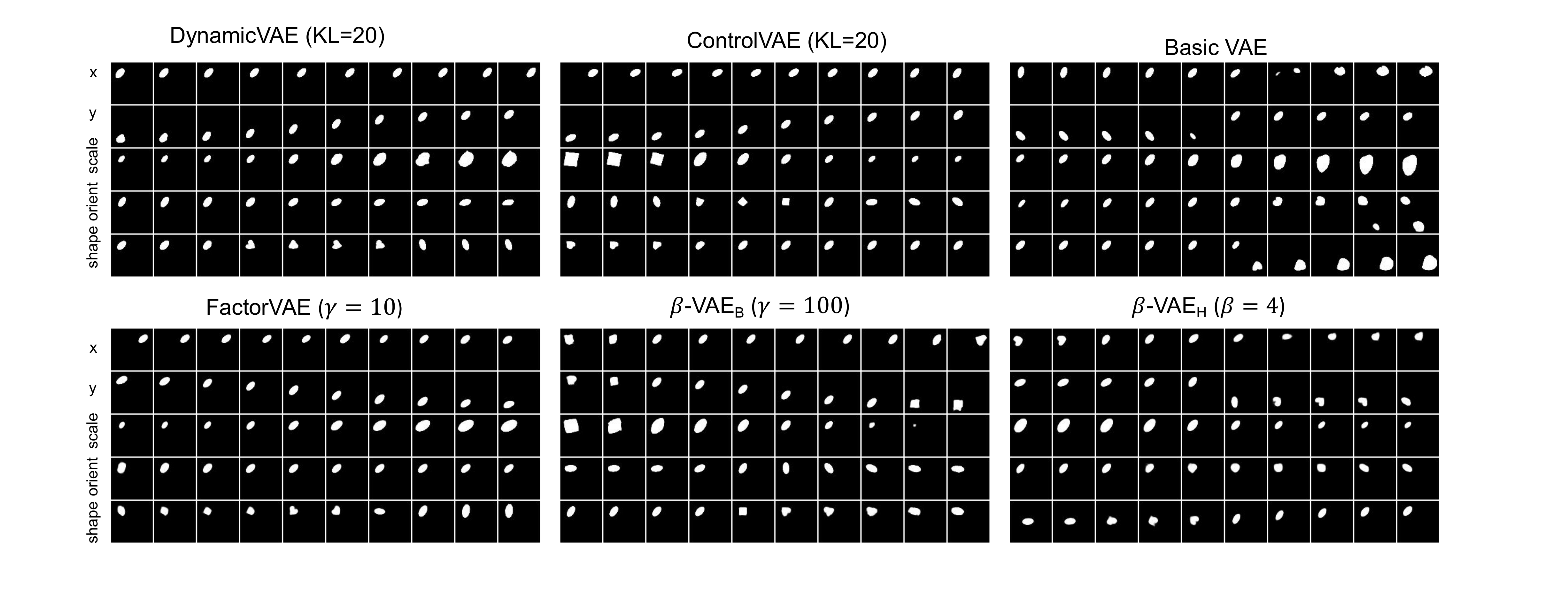}}
\vspace*{-0.1in}
\caption{Rows: latent traversals ordered by the value of KL-divergence with the prior in a descending order. We initialize the latent representation from a seed image, and then traverse a single latent code in a range of $[-3,3]$, while keeping the remaining latent code fixed.}
\label{fig:sprites_image}
\end{center}
\vspace*{-2.5em}
\end{figure*}

\textbf{MNIST and 3D Chair Datasets}~~
We also evaluate the proposed method on the other two datasets: MNIST and 3D Chairs. Fig.~\ref{fig:mnist_trav_dynamic} illustrates some samples of the disentangled factors for DynamicVAE on MNIST. We also verify that our method achieves better disentanglement compared with the other methods, shown in Appendix~\ref{app:extra_exp}. In addition, our method with $\beta<1$ can significantly improve the reconstruction accuracy than $\beta$-VAE as illustrated in Fig.~\ref{fig:mnist_recon} in Appendix~\ref{app:extra_exp}.

\begin{figure*}[tb]
\centering 
\subfigure[Rotation]{\includegraphics[width=0.32\textwidth]{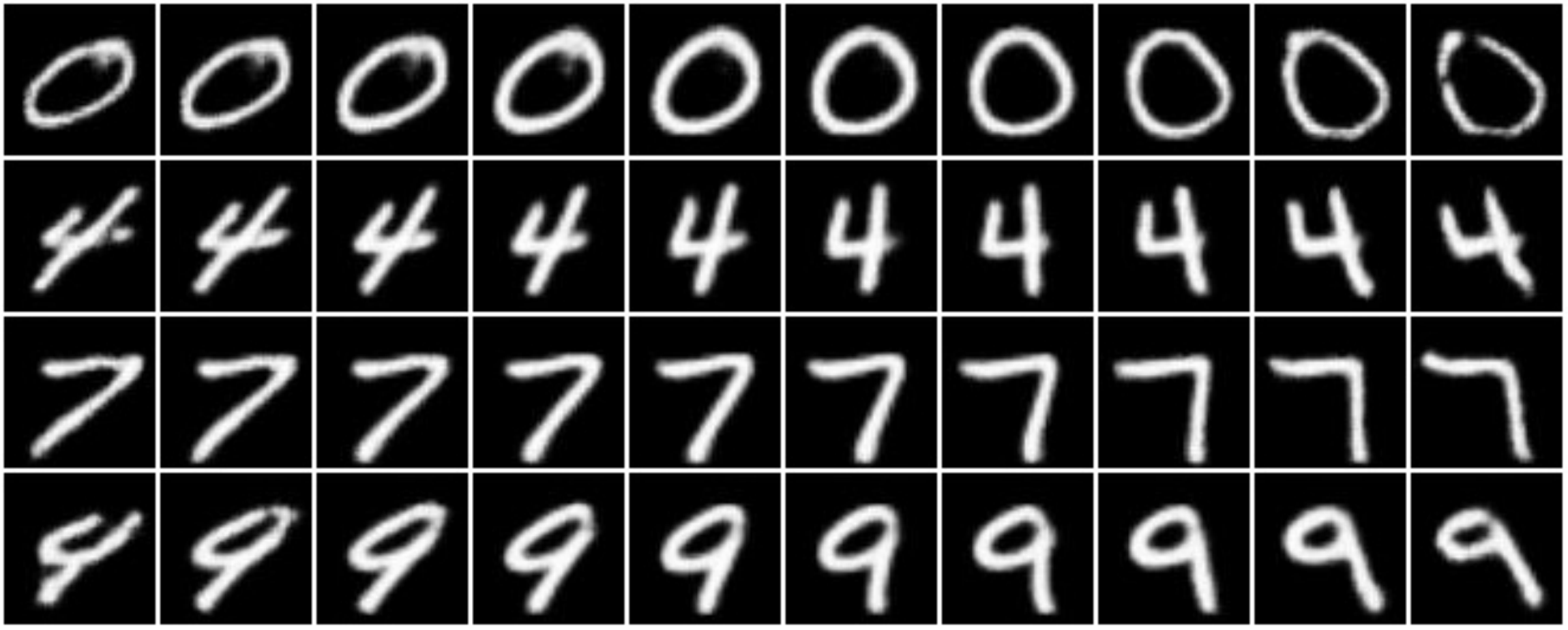}}
\subfigure[Thickness]{\includegraphics[width=0.32\textwidth]{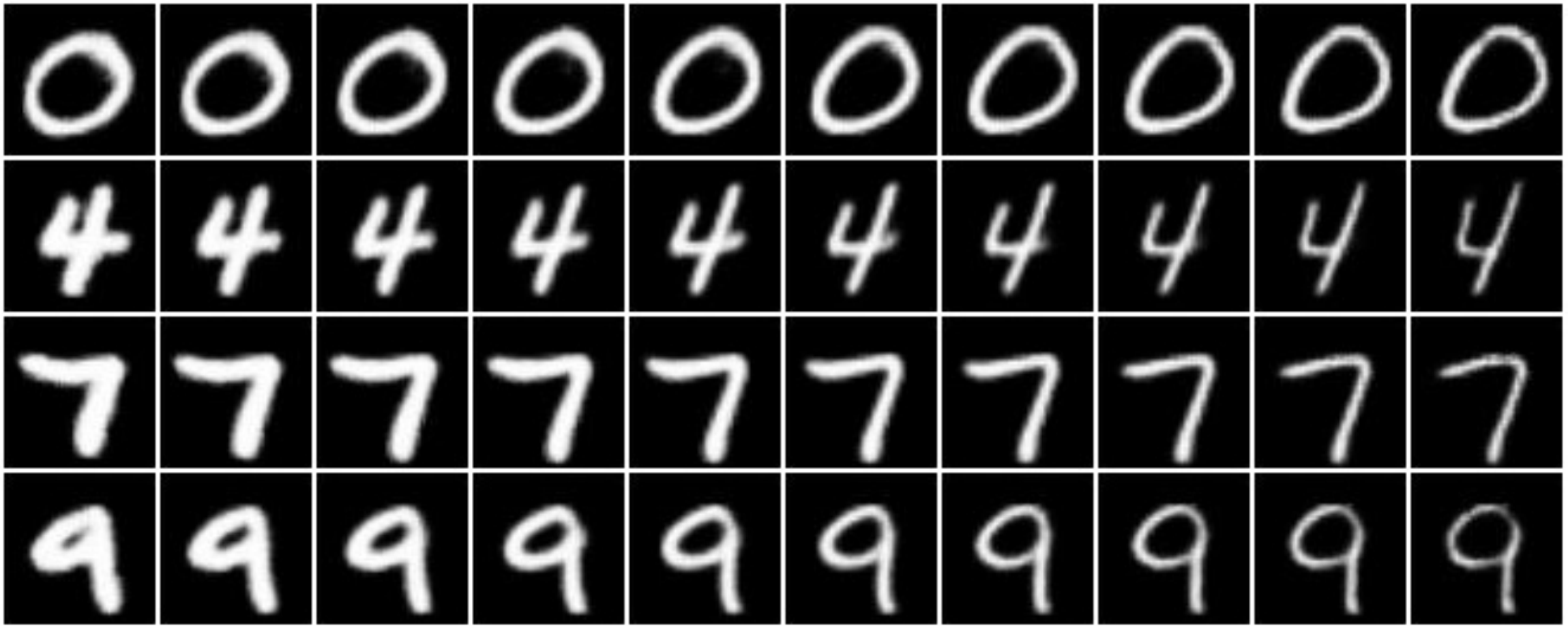}}
\subfigure[Style from left to right]{\includegraphics[width=0.325\textwidth]{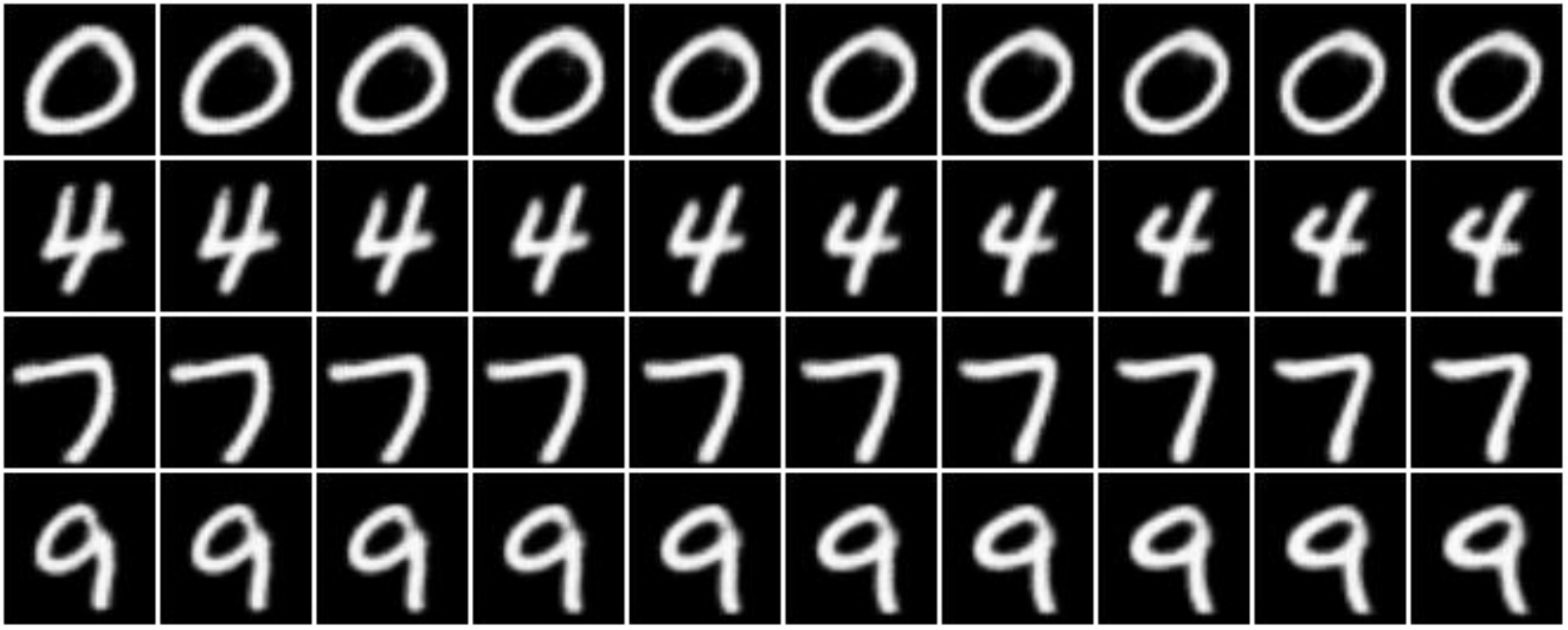}}
\vskip -0.15in
\caption{Latent traversals on MNIST for DynamicVAE. It can be seen that our method can disentangle four different factors: rotation, thickness, size(width) and writing style.}
\label{fig:mnist_trav_dynamic}
\vspace*{-1.5em}
\end{figure*}

We also demonstrate that DynamicVAE can learn many different data generative factors on another challenging dataset, 3D Chairs. We can observe from Fig.~\ref{fig:3d_chair} that our method disentangles six different latent factors, such as wheels, and leg height and azimuth, same as FactorVAE~\citep{kim2018disentangling}.

\textbf{Decoupled Reconstruction and Disentanglement}~~
Additionally, we show that the proposed DynamicVAE is able to decouple the reconstruction and disentanglement learning into two phases, overcoming the problem of balancing the trade-off between reconstruction and disentanglement. Fig.~\ref{fig:decouple} illustrates the RMIG score and reconstruction loss with the increase of training steps after all the factors are disentangled (before $800,000$). It can be seen that RMIG score of our method remains stable as the reconstruction loss drops. Therefore, the proposed method does not introduce any conflict between reconstruction and disentanglement learning.
\begin{figure}[htb]
\begin{minipage}[t]{0.5\columnwidth}
    \centering
    \includegraphics[width=0.94\columnwidth]{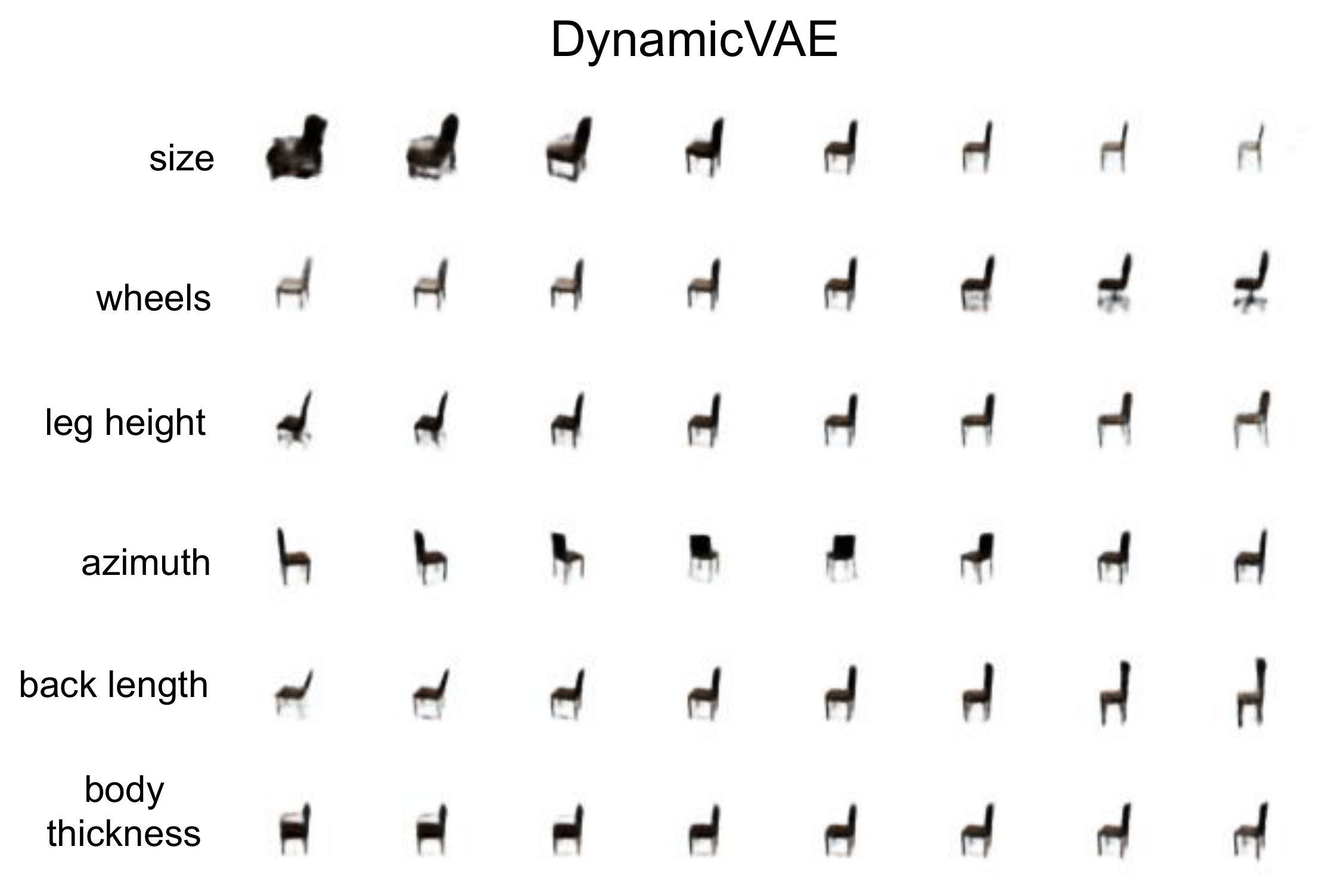}
    \caption{Sample traversals for the six latent factors in our model on 3D Chairs.}
    \label{fig:3d_chair}
\end{minipage}
\hfill
\begin{minipage}[t]{0.49\columnwidth} 
    \centering
    \includegraphics[width=0.94\columnwidth]{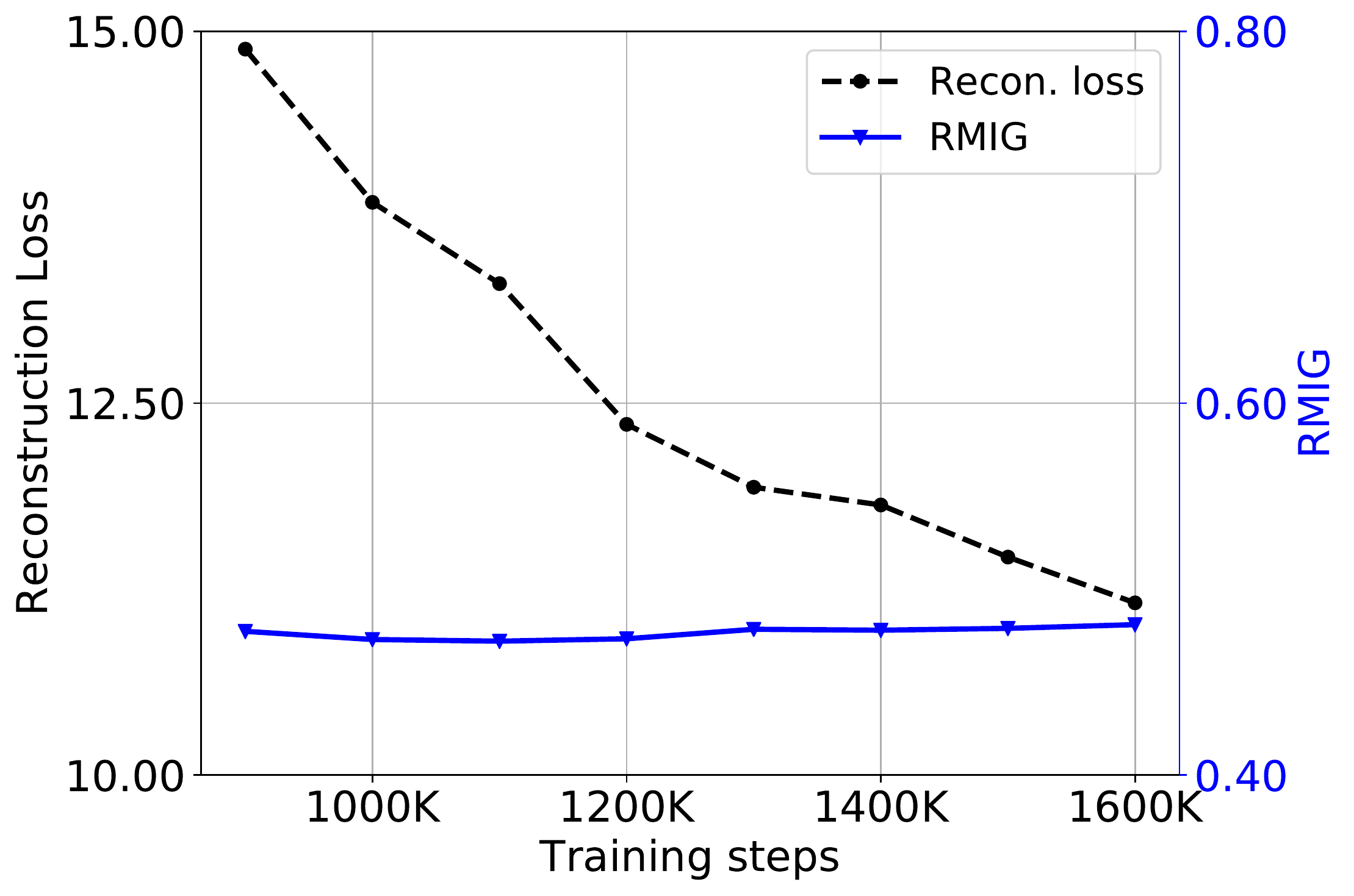}
    \caption{Averaged RMIG score and reconstruction loss vary with training steps.}
    \label{fig:decouple}
\end{minipage}  
\vspace*{-1.5em}
\end{figure}

\subsection{Ablation Studies}
To compare the performance of DynamicVAE and its variants, we perform following ablation studies:
\begin{itemize}[noitemsep, topsep=0pt, leftmargin=0.5cm]
\itemsep0em
\item DynamicVAE-P: it uses positional PI controller with no initialization to a large $\beta(0)$, instead of the incremental PI initialized to a large $\beta(0)$, to tune the weight on KL term in the VAE objective.
\item DynamicVAE-step: it solely adopts step function without ramp function for our annealing method.
\item DynamicVAE-t: this model directly uses the output KL-divergence at time $t$ as a feedback of PI controller without using moving average to smooth it.
\end{itemize}

Table~\ref{tab:dynamic_variant} shows the comparison of RMIG score for DynamicVAE and its variants. It can be observed that DynamicVAE outperforms the other methods in terms of overall RMIG score. We also find that DynamicVAE-step does not perform well because the ramp function is removed from our annealing method, leading to overshoot of PI controller. As a result, it makes the other factors come out earlier and entangled to each other. Thus, we can conclude the importance of adding ramp function for our annealing method. In addition, we can see that the proposed moving average and incremental PI control algorithm also play a critical role to improve the disentanglement.
\begin{table*}[tb]\small
\caption{RMIG for different methods averaged over 5 random seeds. The higher is better.}
\label{tab:dynamic_variant}
\begin{center}
\begin{small}
\begin{tabular}{lllllll}
\toprule
Models/Metric & pos. $x$ & pos. $y$ & Shape & Scale & Orientation  & RMIG \\
\midrule
DynamicVAE &  0.7166 & 0.7179& 0.2004 & 0.6530 & 0.1024 & \textbf{0.4781} $\pm$ 0.0172 \\
DynamicVAE-P & 0.7376 & 0.7317 & 0.0992 & 0.6400 & 0.1120 & 0.4641 $\pm$ 0.0240 \\
DynamicVAE-step  & 0.7209 & 0.7143 & 0.0664 & 0.6218 & 0.1543 & 0.4555 $\pm$ 0.0355 \\
DynamicVAE-t  &  0.7152 & 0.7110 & 0.0997 & 0.6267 & 0.1322 & 0.4570 $\pm$ 0.0182 \\
\bottomrule
\end{tabular}
\end{small}
\end{center}
\vspace*{-1em}
\end{table*}

\section{Related Work}
\label{sec:relatedwork}
Disentangled representation learning can be divided into two main categories: unsupervised learning and supervised learning.

Supervised disentanglement learning~\citep{mathieu2016disentangling,siddharth2017learning,kingma2014semi,reed2014learning} requires the prior knowledge of some data generative factors from human annotation to train the model. Some studies ~\citep{locatello2019fairness, locatello2019challenging, locatello2019disentangling} figure out that it is hard to achieve reliable and good disentanglement without supervision. For supervised learning, the limited labeling information can help ensure a latent space of the VAE with desirable structure w.r.t to the ground-truth latent factors. In order to reduce human annotations, researchers tried to develop weakly supervised learning~\citep{bouchacourt2018multi,hosoya2019group,locatello2020weakly} to learn disentangled representations. However, these methods still require explicit human labeling or assume the change of the two observations is small. In practice, it is unrealistic for initial learners to discover the data generative factors in most real world scenarios.


For unsupervised learning methods, the recent approaches mainly build on Variational Autoencoders (VAEs)~\citep{kingma2013auto} and Generative Adversarial Networks (GANs)~\citep{goodfellow2014generative}. InfoGAN~\citep{lininfogan} is the first scalable unsupervised learning method for disentangling. It, however, suffers from training instabilities and does not perform well in disentanglement learning~\citep{higgins2018towards}, so most recent works are largely based on VAEs models. The VAE models, such as $\beta$-VAE ($\beta >1$), FactorVAE and $\beta$-TCVAE~\citep{chen2018isolating}, often suffer from high reconstruction errors in order to obtain better disentanglement, since they add a large weight to terms in the objective. To address this issue, \citet{shao2020controlvae} develops a controllable variational autoencoder (ControlVAE) to dynamically tune $\beta$ to achieve the trade-off between reconstruction quality and disentanglement. However, ControlVAE fails to decouple disentanglement learning and reconstruction because the computed $\beta$ may oscillate during training, since it uses step function as annealing method and frequently adjusts $\beta$ at each step. In this paper, we propose a new method, DynamicVAE, that can separate disentanglement learning and reconstruction.
\section{Conclusion}
\label{sec:conclusion}
This paper proposed a novel dynamic learning method, DynamicVAE, to address the trade-off problem between reconstruction and disentanglement. Our method is able to turn the weight of $\beta$-VAE to a small value ($\beta \leq 1$) to achieve good disentanglement against the previous default consumption, $\beta>1$. Specifically, we leveraged an incremental PI controller, moving average and a hybrid annealing to stabilize the KL-divergence to decouple the disentanglement and reconstruction. We further theoretically prove the stability of the proposed DynamicVAE. The evaluation results demonstrate DynamicVAE can significantly improve the reconstruction accuracy meanwhile attaining good disentanglement. It decouples disentanglement learning and reconstruction without introducing a conflict between them.


\bibliography{references}

\begin{thebibliography}{41}
\providecommand{\natexlab}[1]{#1}
\providecommand{\url}[1]{\texttt{#1}}
\expandafter\ifx\csname urlstyle\endcsname\relax
  \providecommand{\doi}[1]{doi: #1}\else
  \providecommand{\doi}{doi: \begingroup \urlstyle{rm}\Url}\fi

\bibitem[{\AA}str{\"o}m et~al.(2006){\AA}str{\"o}m, H{\"a}gglund, and
  Astrom]{aastrom2006advanced}
Karl~Johan {\AA}str{\"o}m, Tore H{\"a}gglund, and Karl~J Astrom.
\newblock \emph{Advanced PID control}, volume 461.
\newblock ISA-The Instrumentation, Systems, and Automation Society Research
  Triangle~…, 2006.

\bibitem[Aubry et~al.(2014)Aubry, Maturana, Efros, Russell, and
  Sivic]{aubry2014seeing}
Mathieu Aubry, Daniel Maturana, Alexei~A Efros, Bryan~C Russell, and Josef
  Sivic.
\newblock Seeing 3d chairs: exemplar part-based 2d-3d alignment using a large
  dataset of cad models.
\newblock In \emph{Proceedings of the IEEE conference on computer vision and
  pattern recognition}, pp.\  3762--3769, 2014.

\bibitem[Bengio et~al.(2013)Bengio, Courville, and
  Vincent]{bengio2013representation}
Yoshua Bengio, Aaron Courville, and Pascal Vincent.
\newblock Representation learning: A review and new perspectives.
\newblock \emph{IEEE transactions on pattern analysis and machine
  intelligence}, 35\penalty0 (8):\penalty0 1798--1828, 2013.

\bibitem[Bouchacourt et~al.(2018)Bouchacourt, Tomioka, and
  Nowozin]{bouchacourt2018multi}
Diane Bouchacourt, Ryota Tomioka, and Sebastian Nowozin.
\newblock Multi-level variational autoencoder: Learning disentangled
  representations from grouped observations.
\newblock In \emph{Thirty-Second AAAI Conference on Artificial Intelligence},
  2018.

\bibitem[Burgess et~al.(2018)Burgess, Higgins, Pal, Matthey, Watters,
  Desjardins, and Lerchner]{burgess2018understanding}
Christopher~P Burgess, Irina Higgins, Arka Pal, Loic Matthey, Nick Watters,
  Guillaume Desjardins, and Alexander Lerchner.
\newblock Understanding disentangling in beta-vae.
\newblock \emph{arXiv preprint arXiv:1804.03599}, 2018.

\bibitem[Chen \& Batmanghelich(2019)Chen and Batmanghelich]{chen2019weakly}
Junxiang Chen and Kayhan Batmanghelich.
\newblock Weakly supervised disentanglement by pairwise similarities.
\newblock \emph{arXiv preprint arXiv:1906.01044}, 2019.

\bibitem[Chen et~al.(2018)Chen, Li, Grosse, and Duvenaud]{chen2018isolating}
Tian~Qi Chen, Xuechen Li, Roger~B Grosse, and David~K Duvenaud.
\newblock Isolating sources of disentanglement in variational autoencoders.
\newblock In \emph{Advances in Neural Information Processing Systems}, pp.\
  2610--2620, 2018.

\bibitem[Chen et~al.(2016)Chen, Duan, Houthooft, Schulman, Sutskever, and
  Abbeel]{chen2016infogan}
Xi~Chen, Yan Duan, Rein Houthooft, John Schulman, Ilya Sutskever, and Pieter
  Abbeel.
\newblock Infogan: Interpretable representation learning by information
  maximizing generative adversarial nets.
\newblock In \emph{Advances in neural information processing systems}, pp.\
  2172--2180, 2016.

\bibitem[Denton et~al.(2017)]{denton2017unsupervised}
Emily~L Denton et~al.
\newblock Unsupervised learning of disentangled representations from video.
\newblock In \emph{Advances in neural information processing systems}, pp.\
  4414--4423, 2017.

\bibitem[Do \& Tran(2020)Do and Tran]{do2020theory}
Kien Do and Truyen Tran.
\newblock Theory and evaluation metrics for learning disentangled
  representations.
\newblock In \emph{Proceedings of ICLR}, 2020.

\bibitem[Fraccaro et~al.(2017)Fraccaro, Kamronn, Paquet, and
  Winther]{fraccaro2017disentangled}
Marco Fraccaro, Simon Kamronn, Ulrich Paquet, and Ole Winther.
\newblock A disentangled recognition and nonlinear dynamics model for
  unsupervised learning.
\newblock In \emph{Advances in Neural Information Processing Systems}, pp.\
  3601--3610, 2017.

\bibitem[Goodfellow et~al.(2014)Goodfellow, Pouget-Abadie, Mirza, Xu,
  Warde-Farley, Ozair, Courville, and Bengio]{goodfellow2014generative}
Ian Goodfellow, Jean Pouget-Abadie, Mehdi Mirza, Bing Xu, David Warde-Farley,
  Sherjil Ozair, Aaron Courville, and Yoshua Bengio.
\newblock Generative adversarial nets.
\newblock In \emph{Advances in neural information processing systems}, pp.\
  2672--2680, 2014.

\bibitem[Higgins et~al.(2017{\natexlab{a}})Higgins, Matthey, Pal, Burgess,
  Glorot, Botvinick, Mohamed, and Lerchner]{higgins2016beta}
Irina Higgins, Loic Matthey, Arka Pal, Christopher Burgess, Xavier Glorot,
  Matthew Botvinick, Shakir Mohamed, and Alexander Lerchner.
\newblock beta-vae: Learning basic visual concepts with a constrained
  variational framework.
\newblock \emph{Proceedings of ICLR}, 2017{\natexlab{a}}.

\bibitem[Higgins et~al.(2017{\natexlab{b}})Higgins, Matthey, Pal, Burgess,
  Glorot, Botvinick, Mohamed, and Lerchner]{higgins2017beta}
Irina Higgins, Loic Matthey, Arka Pal, Christopher Burgess, Xavier Glorot,
  Matthew Botvinick, Shakir Mohamed, and Alexander Lerchner.
\newblock beta-vae: Learning basic visual concepts with a constrained
  variational framework.
\newblock \emph{ICLR}, 2\penalty0 (5):\penalty0 6, 2017{\natexlab{b}}.

\bibitem[Higgins et~al.(2018)Higgins, Amos, Pfau, Racaniere, Matthey, Rezende,
  and Lerchner]{higgins2018towards}
Irina Higgins, David Amos, David Pfau, Sebastien Racaniere, Loic Matthey,
  Danilo Rezende, and Alexander Lerchner.
\newblock Towards a definition of disentangled representations.
\newblock \emph{arXiv preprint arXiv:1812.02230}, 2018.

\bibitem[Hosoya(2019)]{hosoya2019group}
Haruo Hosoya.
\newblock Group-based learning of disentangled representations with
  generalizability for novel contents.
\newblock In \emph{IJCAI}, pp.\  2506--2513, 2019.

\bibitem[Hughes(2015)]{hughes2015applications}
Jonathan~L Hughes.
\newblock Applications of stability analysis to nonlinear discrete dynamical
  systems modeling interactions.
\newblock 2015.

\bibitem[Isa et~al.(2011)Isa, Meng, Saad, and Fauzi]{isa2011comparative}
IS~Isa, BCC Meng, Z~Saad, and NA~Fauzi.
\newblock Comparative study of pid controlled modes on automatic water level
  measurement system.
\newblock In \emph{2011 IEEE 7th International Colloquium on Signal Processing
  and its Applications}, pp.\  237--242. IEEE, 2011.

\bibitem[Jury(1964)]{jury1964theory}
Eliahu~Ibraham Jury.
\newblock Theory and application of the z-transform method.
\newblock 1964.

\bibitem[Kim \& Mnih(2018)Kim and Mnih]{kim2018disentangling}
Hyunjik Kim and Andriy Mnih.
\newblock Disentangling by factorising.
\newblock In \emph{International Conference on Machine Learning}, pp.\
  2654--2663, 2018.

\bibitem[Kingma \& Welling(2013)Kingma and Welling]{kingma2013auto}
Diederik~P Kingma and Max Welling.
\newblock Auto-encoding variational bayes.
\newblock \emph{arXiv preprint arXiv:1312.6114}, 2013.

\bibitem[Kingma et~al.(2014)Kingma, Mohamed, Rezende, and
  Welling]{kingma2014semi}
Durk~P Kingma, Shakir Mohamed, Danilo~Jimenez Rezende, and Max Welling.
\newblock Semi-supervised learning with deep generative models.
\newblock In \emph{Advances in neural information processing systems}, pp.\
  3581--3589, 2014.

\bibitem[Kumar et~al.(2018)Kumar, Sattigeri, and
  Balakrishnan]{kumar2018variational}
Abhishek Kumar, Prasanna Sattigeri, and Avinash Balakrishnan.
\newblock Variational inference of disentangled latent concepts from unlabeled
  observations.
\newblock In \emph{International Conference on Learning Representations}, 2018.

\bibitem[Lake et~al.(2017)Lake, Ullman, Tenenbaum, and
  Gershman]{lake2017building}
Brenden~M Lake, Tomer~D Ullman, Joshua~B Tenenbaum, and Samuel~J Gershman.
\newblock Building machines that learn and think like people.
\newblock \emph{Behavioral and brain sciences}, 40, 2017.

\bibitem[Lin et~al.(2020)Lin, Thekumparampil, Fanti, and Oh]{lininfogan}
Zinan Lin, Kiran~K Thekumparampil, Giulia Fanti, and Sewoong Oh.
\newblock Infogan-cr and modelcentrality: Self-supervised model training and
  selection for disentangling gans.
\newblock In \emph{international conference on machine learning}, 2020.

\bibitem[Liu \& Theodorou(2019)Liu and Theodorou]{liu2019deep}
Guan-Horng Liu and Evangelos~A Theodorou.
\newblock Deep learning theory review: An optimal control and dynamical systems
  perspective.
\newblock \emph{arXiv preprint arXiv:1908.10920}, 2019.

\bibitem[Locatello et~al.(2019{\natexlab{a}})Locatello, Abbati, Rainforth,
  Bauer, Sch{\"o}lkopf, and Bachem]{locatello2019fairness}
Francesco Locatello, Gabriele Abbati, Thomas Rainforth, Stefan Bauer, Bernhard
  Sch{\"o}lkopf, and Olivier Bachem.
\newblock On the fairness of disentangled representations.
\newblock In \emph{Advances in Neural Information Processing Systems}, pp.\
  14611--14624, 2019{\natexlab{a}}.

\bibitem[Locatello et~al.(2019{\natexlab{b}})Locatello, Bauer, Lucic, Raetsch,
  Gelly, Sch{\"o}lkopf, and Bachem]{locatello2019challenging}
Francesco Locatello, Stefan Bauer, Mario Lucic, Gunnar Raetsch, Sylvain Gelly,
  Bernhard Sch{\"o}lkopf, and Olivier Bachem.
\newblock Challenging common assumptions in the unsupervised learning of
  disentangled representations.
\newblock In \emph{international conference on machine learning}, pp.\
  4114--4124, 2019{\natexlab{b}}.

\bibitem[Locatello et~al.(2019{\natexlab{c}})Locatello, Tschannen, Bauer,
  R{\"a}tsch, Sch{\"o}lkopf, and Bachem]{locatello2019disentangling}
Francesco Locatello, Michael Tschannen, Stefan Bauer, Gunnar R{\"a}tsch,
  Bernhard Sch{\"o}lkopf, and Olivier Bachem.
\newblock Disentangling factors of variations using few labels.
\newblock In \emph{International Conference on Learning Representations},
  2019{\natexlab{c}}.

\bibitem[Locatello et~al.(2020)Locatello, Poole, R{\"a}tsch, Sch{\"o}lkopf,
  Bachem, and Tschannen]{locatello2020weakly}
Francesco Locatello, Ben Poole, Gunnar R{\"a}tsch, Bernhard Sch{\"o}lkopf,
  Olivier Bachem, and Michael Tschannen.
\newblock Weakly-supervised disentanglement without compromises.
\newblock \emph{arXiv preprint arXiv:2002.02886}, 2020.

\bibitem[Mathieu et~al.(2019)Mathieu, Rainforth, Siddharth, and
  Teh]{mathieu2019disentangling}
Emile Mathieu, Tom Rainforth, N~Siddharth, and Yee~Whye Teh.
\newblock Disentangling disentanglement in variational autoencoders.
\newblock In \emph{International Conference on Machine Learning}, pp.\
  4402--4412, 2019.

\bibitem[Mathieu et~al.(2016)Mathieu, Zhao, Zhao, Ramesh, Sprechmann, and
  LeCun]{mathieu2016disentangling}
Michael~F Mathieu, Junbo~Jake Zhao, Junbo Zhao, Aditya Ramesh, Pablo
  Sprechmann, and Yann LeCun.
\newblock Disentangling factors of variation in deep representation using
  adversarial training.
\newblock In \emph{Advances in neural information processing systems}, pp.\
  5040--5048, 2016.

\bibitem[Nie et~al.(2020)Nie, Karras, Garg, Debhath, Patney, Patel, and
  Anandkumar]{nie2020semi}
Weili Nie, Tero Karras, Animesh Garg, Shoubhik Debhath, Anjul Patney, Ankit~B
  Patel, and Anima Anandkumar.
\newblock Semi-supervised stylegan for disentanglement learning.
\newblock \emph{Proceedings of ICML}, pp.\  arXiv--2003, 2020.

\bibitem[Reed et~al.(2014)Reed, Sohn, Zhang, and Lee]{reed2014learning}
Scott Reed, Kihyuk Sohn, Yuting Zhang, and Honglak Lee.
\newblock Learning to disentangle factors of variation with manifold
  interaction.
\newblock In \emph{International Conference on Machine Learning}, pp.\
  1431--1439, 2014.

\bibitem[Shao et~al.(2020)Shao, Yao, Sun, Zhang, Liu, Liu, Wang, and
  Abdelzaher]{shao2020controlvae}
Huajie Shao, Shuochao Yao, Dachun Sun, Aston Zhang, Shengzhong Liu, Dongxin
  Liu, Jun Wang, and Tarek Abdelzaher.
\newblock Controlvae: Controllable variational autoencoder.
\newblock \emph{Proceedings of the 37th International Conference on Machine
  Learning (ICML)}, 2020.

\bibitem[Shu et~al.(2019)Shu, Chen, Kumar, Ermon, and Poole]{shu2019weakly}
Rui Shu, Yining Chen, Abhishek Kumar, Stefano Ermon, and Ben Poole.
\newblock Weakly supervised disentanglement with guarantees.
\newblock \emph{arXiv preprint arXiv:1910.09772}, 2019.

\bibitem[Siddharth et~al.(2017)Siddharth, Paige, Van~de Meent, Desmaison,
  Goodman, Kohli, Wood, and Torr]{siddharth2017learning}
Narayanaswamy Siddharth, Brooks Paige, Jan-Willem Van~de Meent, Alban
  Desmaison, Noah Goodman, Pushmeet Kohli, Frank Wood, and Philip Torr.
\newblock Learning disentangled representations with semi-supervised deep
  generative models.
\newblock In \emph{Advances in Neural Information Processing Systems}, pp.\
  5925--5935, 2017.

\bibitem[Stooke et~al.(2020)Stooke, Achiam, and Abbeel]{stooke2020responsive}
Adam Stooke, Joshua Achiam, and Pieter Abbeel.
\newblock Responsive safety in reinforcement learning by pid lagrangian
  methods.
\newblock \emph{arXiv preprint arXiv:2007.03964}, 2020.

\bibitem[van Steenkiste et~al.(2019)van Steenkiste, Locatello, Schmidhuber, and
  Bachem]{van2019disentangled}
Sjoerd van Steenkiste, Francesco Locatello, J{\"u}rgen Schmidhuber, and Olivier
  Bachem.
\newblock Are disentangled representations helpful for abstract visual
  reasoning?
\newblock In \emph{Advances in Neural Information Processing Systems}, pp.\
  14245--14258, 2019.

\bibitem[Yang et~al.(2015)Yang, Reed, Yang, and Lee]{yang2015weakly}
Jimei Yang, Scott~E Reed, Ming-Hsuan Yang, and Honglak Lee.
\newblock Weakly-supervised disentangling with recurrent transformations for 3d
  view synthesis.
\newblock In \emph{Advances in Neural Information Processing Systems}, pp.\
  1099--1107, 2015.

\bibitem[Zabczyk(1992)]{zabczyk1992mathematical}
Jerzy Zabczyk.
\newblock Mathematical control theory.
\newblock \emph{An Introduction}, 1992.

\end{thebibliography}
\bibliographystyle{iclr2021_conference}

\newpage
\appendix
\section{Model Configurations and Hyperparameter Settings}
\label{sec:configure}
We summarize the detailed model configurations and hyperparameter settings for DynamicVAE below. 

Following the same model architecture of $\beta$-VAE, we adopt a convolutional layer and deconvolutional layer for our experiments. We use Adam optimizer with $\beta_1=0.90$, $\beta_2=0.99$ and a learning rate tuned from $10^{-4}$. We set $K_p$ and $K_i$ for PI algorithm to $0.01$ and $0.005$, respectively. The weight $\beta(t)$ for incremental PI controller is initialized with $150$, $100$ and $50$ for dSprites, MNIST and 3D Chairs, respectively. The batch size is set to $128$. Using the similar methodology in~\citep{burgess2018understanding}, we train a single model by gradually increasing KL-divergence from $0.5$ to a desired value $C$ with a step function $s$ and ramp function for every $M$ training steps, as shown in Fig.~\ref{fig:hybrid}. In the experiment, we set the step, $s$, to $0.15$ per $M=6,000$ training steps (including $5,000$ in step function and $1,000$ in ramp function) as the information capacity (desired KL- divergence) increases from $0.5$ until to $20$, $26$ and $18$ for dSprites, MNIST and 3D Chairs datasets respectively. In addition, the window size of moving average is $T=5$ with equal weight $\alpha$. Our model adopts the same encoder and decoder architecture as $\beta$-VAE$_H$ and ControlVAE except for plugging in PI control algorithm, as illustrated in Table~\ref{tab:2Dshape_model} and Table~\ref{tab:3Dchair_model}.

\begin{table}[htb]
\caption{Encoder and decoder architecture for disentangled representation learning on dSprites and MNIST.}
\label{tab:2Dshape_model}
\begin{center}
\begin{scriptsize}
\begin{tabular}{ll}
\toprule
Encoder & Decoder \\
\midrule
Input $64\times64$ binary image  & Input $\in \mathbb{R}^{10}$ \\
\midrule
$4\times4$ conv. $32$ ReLU. stride 2 &  FC. 256 ReLU.\\
\midrule
$4\times4$ conv. $32$ ReLU. stride 2 & $4\times4$ upconv. $256$ ReLU. stride 2 \\
\midrule
$4\times4$ conv. $64$ ReLU. stride 2 &  $4\times4$ upconv. $64$ ReLU. stride 2.\\
\midrule
$4\times4$ conv. $64$ ReLU. stride 2 & $4\times4$ upconv. $64$ ReLU. stride 2 \\
\midrule
$4\times4$ conv. $256$ ReLU. stride 1 &  $4\times4$ upconv. $32$ ReLU. stride 2 \\
\midrule
FC $256$. FC. $2 \times 10$ &  $4\times4$ upconv. $32$ ReLU. stride 2 \\
\bottomrule
\end{tabular}
\end{scriptsize}
\end{center}
\end{table}

\begin{table}[htb]
\caption{Encoder and decoder architecture for disentangled representation learning on 3D Chairs.}
\label{tab:3Dchair_model}
\begin{center}
\begin{scriptsize}
\begin{tabular}{ll}
\toprule
Encoder & Decoder \\
\midrule
Input $64\times64\times3$  & Input $\in \mathbb{R}^{16}$ \\
\midrule
$4\times4$ conv. $32$ ReLU. stride 2 &  FC. 256 ReLU.\\
\midrule
$4\times4$ conv. $32$ ReLU. stride 2 & $4\times4$ upconv. $256$ ReLU. stride 2 \\
\midrule
$4\times4$ conv. $64$ ReLU. stride 2 &  $4\times4$ upconv. $64$ ReLU. stride 2.\\
\midrule
$4\times4$ conv. $64$ ReLU. stride 2 & $4\times4$ upconv. $64$ ReLU. stride 2 \\
\midrule
$4\times4$ conv. $256$ ReLU. stride 1 &  $4\times4$ upconv. $32$ ReLU. stride 2 \\
\midrule
FC $256$. FC. $2 \times 10$ &  $4\times4$ upconv. $32$ ReLU. stride 2 \\
\bottomrule
\end{tabular}
\end{scriptsize}
\end{center}
\end{table}

\subsection{PI Parameter Tuning and Set Point Guidelines}
\label{sec:setpoint}
We can tune PI parameters by following the conditions that guarantee the stability of the proposed method in Eq.\eqref{eq:condition_stable} in Appendix~\ref{app:stability}. In addition, $\beta(0)$ is initialized to a sufficiently large value in order to guarantee the KL-divergence is closed to zero at the beginning of model training. On the other hand, the choice of desired value of KL-divergence (set point) is very simple. Since our method achieves good disentanglement when $\beta = 1$, we can set its desired value to equal or lower than the KL-divergence of the original VAE as it converges.

\subsection{Hybrid Annealing Method}
Fig.~\ref{fig:hybrid} shows the hybrid annealing method that combines step function with ramp function to gradually increase the desired value of KL-divergence for DynamicVAE.
\label{app:hybrid}
\begin{figure*}[!tb]
\begin{center}
\centerline{\includegraphics[width=0.8\textwidth]{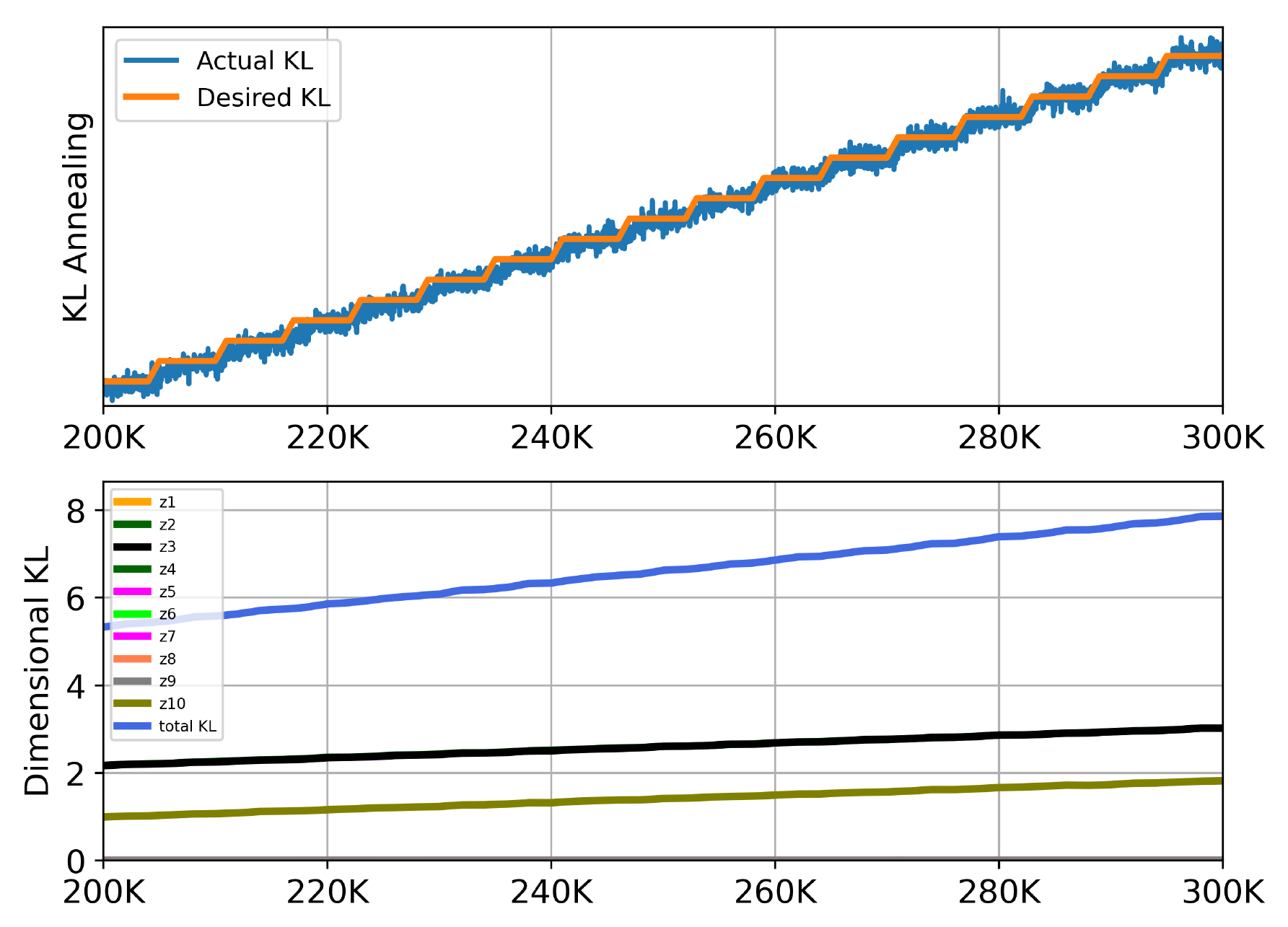}}
\vskip -0.05in
\caption{Top: A hybrid annealing method that combines step function with ramp function. Bottom: dimwise KL divergence.}
\label{fig:hybrid}
\end{center}
\end{figure*}

\section{Algorithm}\label{sec:pid}
We summarize the incremental PI algorithm in Algorithm~\ref{alg:pid}.
\begin{algorithm}[!t]
   \caption{Incremental PI Control}
   \label{alg:pid}
 \begin{multicols}{2}
\begin{algorithmic}[1]\small
   \STATE {\bfseries Input:} desired KL $C$, coefficients $K_p$, $K_i$, $\beta_{min}$, iterations $N$, window $T$
   \STATE {\bfseries Output:} weight $\beta(t)$ at training step $t$
   \STATE {\bfseries Initialization}: $\beta(0) = 150 ~ (100), y_{KL}(0) = 0$
   \FOR{$t=1$ {\bfseries to} $N$}
   \STATE Sample KL-divergence, $y_{KL}(t)$
   \STATE $y(t) = \sum_{t-T}^t \alpha_i y_{KL}(i)$
   \STATE $e(t) \leftarrow C - y(t)$
   \STATE $dP(t) \leftarrow K_p [\sigma(-e(t))-\sigma(-e(t-1))]$
   \STATE $dI(t) \leftarrow K_i e(t) $
   \IF{ $\beta(t-1) < \beta_{min}$}
   		\STATE $dI(t) \leftarrow 0$ // wind up
	\ENDIF
	\STATE $d\beta(t) \leftarrow dP(t) + dI(t)$
   \STATE $\beta(t) \leftarrow d\beta(t) + \beta(t-1)$
   \IF{$\beta(t)< \beta_{min}$}
   \STATE $\beta(t) \leftarrow \beta_{min}$
   \ENDIF
   \STATE \textbf{Return} $\beta(t)$
 \ENDFOR
\end{algorithmic}
\end{multicols}
\end{algorithm}

\section{Proof of Stability in Theorem \ref{thm:stability}}
\label{app:stability}
In this section we provide the omitted proof in the main paper about Theorem~\ref{thm:stability}. For convenience purpose, we first restate Theorem~\ref{thm:stability} below. 
\stabmain*
\begin{proof}
At a high level, the proof goes by showing the spectral norm of the Jacobian matrix $A$ to be strictly less than 1 under the given condition, which is both sufficient and necessary for stability. To start with, recall that the Jacobian matrix $A$ at equilibrium point $x^*$ is defined by
\begin{equation}\label{eq:jacobian}
A = 
\begin{bmatrix}
\frac{\partial f_1}{\partial x_1} & \frac{\partial f_1}{\partial x_2} & \frac{\partial f_1}{\partial x_3}\\
\frac{\partial f_2}{\partial x_1} & \frac{\partial f_2}{\partial x_2} & \frac{\partial f_2}{\partial x_3} \\
\frac{\partial f_3}{\partial x_1} & \frac{\partial f_3}{\partial x_2} & \frac{\partial f_3}{\partial x_3} \\
\end{bmatrix}_{|x=x^*}
=
\begin{bmatrix}
K_1 & K_2 & K_3 \\
K_4 & K_5 & 0 \\
0 & 1  & 0  \\
\end{bmatrix},
\end{equation}
where 
\begin{subequations}\label{eq:K_n}
\begin{align}
& K_1 =\frac{\partial f_1}{\partial x_1}|_{x_1=x_1^*} = 1 \\
& K_2=\frac{\partial f_1}{\partial x_2}|_{x_2=x_2^*} = K_i+K_p\sigma(x_2^*-C)[1-\sigma(x_2^*-C)] = \frac{1}{4}K_p + K_i \\
& K_3 = \frac{\partial f_1}{\partial x_3}|_{x_3=x_3^*} = -K_p\sigma(x_3^*-C)[1-\sigma(x_3^*-C)] = -\frac{1}{4}K_p \\
& K_4 =\frac{\partial f_2}{\partial x_1}|_{x_1=x_1^*} = \frac{a}{1+a} g'(x_1^*) \\
& K_5 = \frac{\partial f_2}{\partial x_2}|_{x_2=x_2^*} = \frac{1}{1+a} \\
& \frac{\partial f_2}{\partial x_3}|_{x_3=x_3^*} = 0 \\
& \frac{\partial f_3}{\partial x_1}|_{x_1=x_1^*} =0, \quad \frac{\partial f_3}{\partial x_2}|_{x_2=x_2^*} =1, \ \frac{\partial f_3}{\partial x_3}|_{x_3=x_3^*} =0.
\end{align}
\end{subequations}
In order to guarantee the stability of our state space model, the modulus of eigenvalue $\lambda$ of $A$ should be smaller than 1, i.e., $|\lambda| < 1$. By definition, the eigenvalues of $A$ can be obtained by computing the roots of the following characteristic polynomial:
\begin{equation}\label{eq:eigen_a}
\begin{aligned}
    det(\lambda I-A)&=\left[\begin{matrix}
    \lambda-K_1 & -K_2 & -K_3\\
    -K_4 & \lambda-K_5 & 0\\
    0 & -1  & \lambda \\
    \end{matrix} \right] \\
    &=\lambda^3-(K_1+K_5)\lambda^2+(K_1K_5-K_2K_4)\lambda -K_3K_4=0
\end{aligned}
\end{equation}
Instead of computing the (complex) roots of the above cubic polynomial analytically, we use the following bilinear transformation~\citep{jury1964theory} to map the unit circle $|\lambda|<1$ to the left half plane such that its real root is less than $0$~\citep{hughes2015applications}:
\begin{equation}\label{eq:xi_trans}
    \xi=\frac{\lambda-1}{\lambda+1}\ \quad \iff \quad \ \lambda = -\frac{\xi+1}{\xi-1}.
\end{equation}

Substituting $\lambda$ in Eq.\eqref{eq:eigen_a} with \eqref{eq:xi_trans} , we have
\begin{equation}\label{eq:xi_b}
b_3\xi^3+b_2\xi^2+b_1\xi+b_0=0,
\end{equation}
where 
\begin{equation}
 \left\{
   \begin{array}{lr}
    b_3 = K_1+K_5+K_1K_5-K_2K_4+K_3K_4+1 \\
    b_2 = K_1+K_5-K_1K_5+K_2K_4  - 3K_3K_4 + 3 \\
    b_1 = -K_1 - K_5  - K_1 K_5 + K_2 K_4  + 3 K_3 K_4 +3 \\
    b_0 = -K_1 - K_5 + K_1 K_5 - K_2 K_4  - K_3 K_4 + 1 \\ 
    \end{array}
    \right.
\end{equation}
Clearly, using the above transformation, we know that $|\lambda| < 1$ iff the real part of $\lambda$ is less than 0, i.e., $\Re{\xi} < 0$. In order to ensure $\Re{\xi }<0$, based on Routh–Hurwitz stability criterion~\citep{zabczyk1992mathematical}, $b_0$, $b_1$, $b_2$, $b_3$ should satisfy the following sufficient and necessary condition.
\begin{equation}\label{eq:b_n}
 \left\{
   \begin{array}{lr}
    b_0 >0 \\
    b_1>0 \\
    b_2 > 0 \\
    b_1b_2 > b_0b_3 \\
    \end{array}
    \right.
\end{equation}
Substitute Eqs.~\eqref{eq:K_n}, \eqref{eq:xi_b} and \eqref{eq:b_n} into the above system of inequalities, yielding
\begin{equation}\small \label{eq:Routh}
\left\{
\begin{aligned}
& b_3=\frac{4a+8-(K_p+2K_i)ag'(x_1^*)}{2(1+a)} > 0 \\
& b_2=\frac{4(1+a)+(K_p+K_i) a g'(x_1^*)}{(1+a)} > 0 \\
& b_1b_2-b_3b_0= \frac{-0.5K_p^2 a^2 g'(x_1^*) ^2 - 2a[K_p - 8K_i (1+a)]g'(x_1^*) + 8a(1+a)}{(1+a)^2}>0 \\
& b_0=\frac{-K_i a}{1+a}g'(x_1^*) > 0 \\
\end{aligned} 
\right.
\end{equation}


To complete the proof, recall that $a > 0$ and we assume $g'(x) < 0, \forall x$. Hence the coefficients of PI controller, $K_p$ and $K_i$ in Eq.\eqref{eq:Routh}, need to satisfy the following conditions.
\begin{equation}\label{eq:PI_final}
 \left\{
  \begin{aligned}
       & K_p + 2K_i>\frac{4(2 + a)}{a g'(x_1^*)}  \\
       & K_p + K_i<-\frac{4(1 + a)}{a g'(x_1^*)}  \\
       & -0.5K_p^2 a g'(x_1^*)^2 - 2[K_p - 8K_i(1+a)] g'(x_1^*)+8(1 + a)>0 \\
       & K_i > 0 \\
 \end{aligned}
  \right.
\end{equation}
Since $K_p>0, K_i>0$ in our designed PI control algorithm and $g'(x_1^*)<0$, we can further simplify it as
\begin{equation}\label{eq:condition_final}
 \left\{
  \begin{aligned}
    & K_p + K_i<-\frac{4(1 + a)}{a g'(x_1^*)}  \\
    & -0.5K_p^2 a g'(x_1^*)^2 - 2[K_p - 8K_i(1+a)] g'(x_1^*)+ 8(1 + a)>0 \\
    & K_i > 0, K_p > 0 \\
 \end{aligned}
  \right.
\end{equation}

\begin{figure*}[!t]
\centering 
\subfigure[MNIST dataset]{\includegraphics[width=0.49\textwidth]{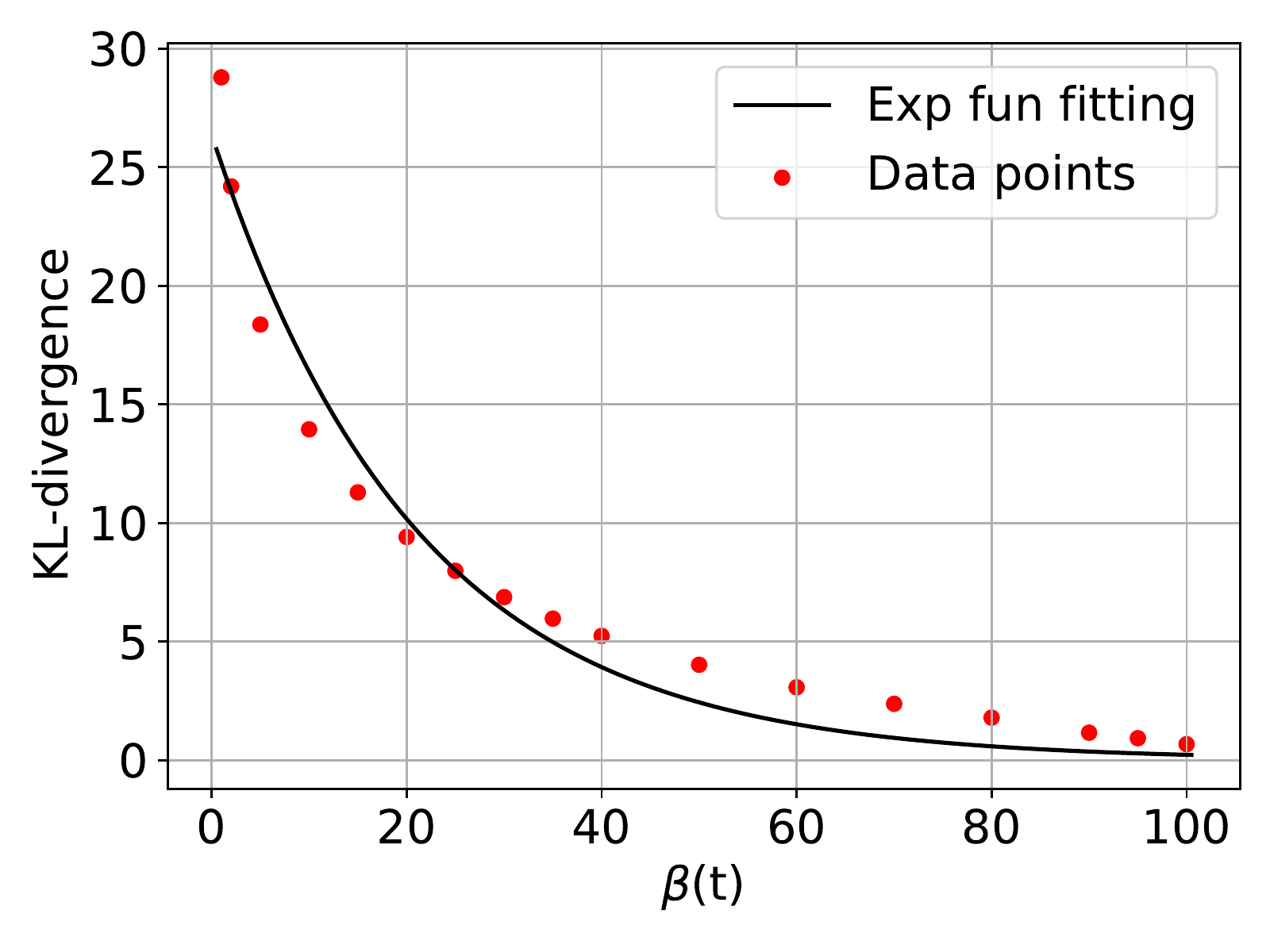}}
\subfigure[dSprites dataset]{\includegraphics[width=0.49\textwidth]{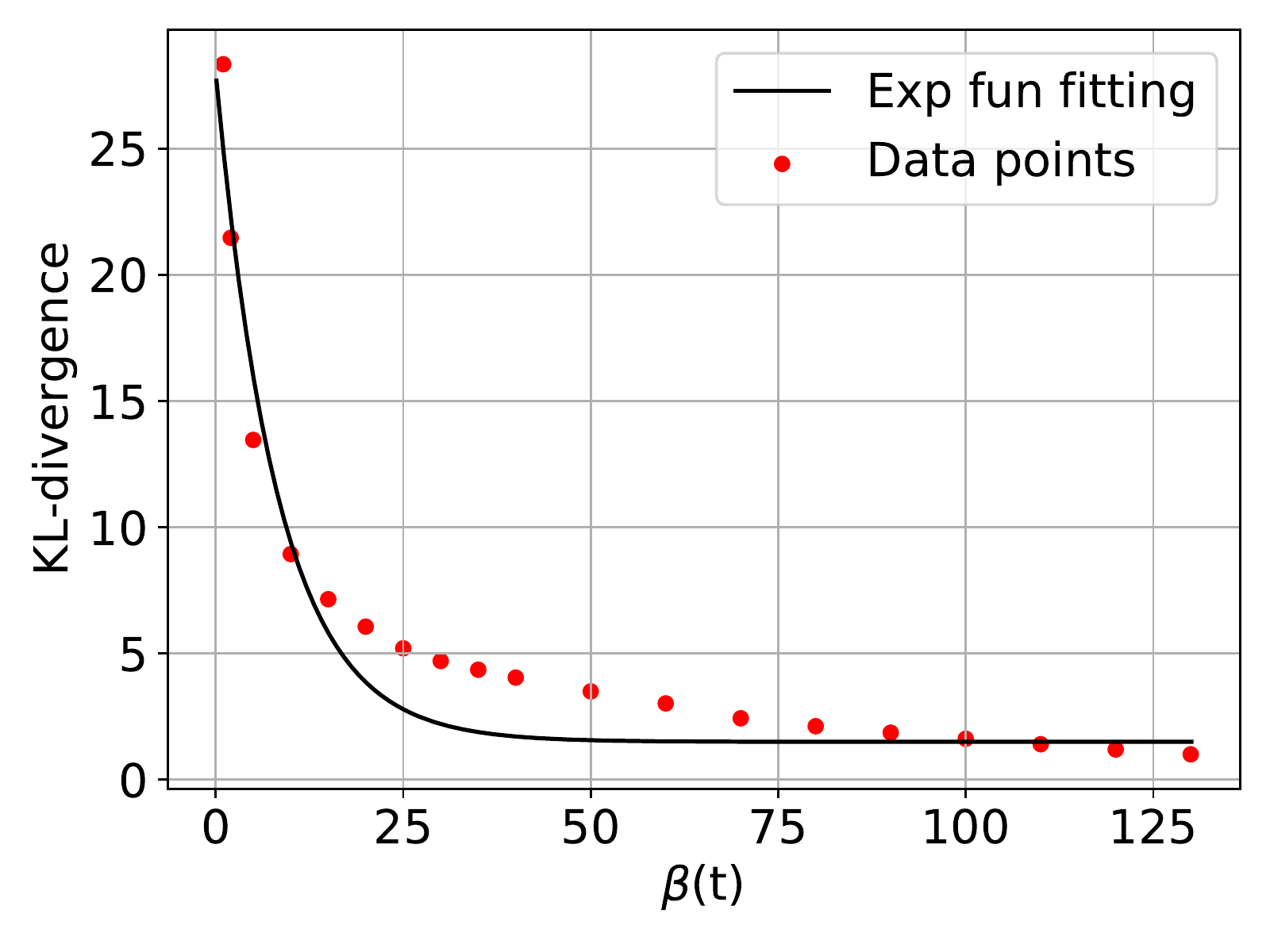}}
\vskip -0.05in
\caption{(a) $g(x(t))$ on MNIST dataset. (b) $g(x(t))$ on dSprites dataset.}
\label{fig:gx_fitting}
\end{figure*}

Therefore, as $K_p$ and $K_i$ meet the above conditions~\eqref{eq:condition_final}, our DynamicVAE would be stable at the set point, which is verified by the following experiments on different datasets.
\end{proof}

\subsection{Verification on Benchmark Datasets}
\label{app:verify_stability}
We first verify the validity of our assumption that $g'(x) <0 $ in Theorem~\ref{thm:stability}. Fig.~\ref{fig:gx_fitting} illustrates the relationship between $\beta(t)$ and the actual KL when model training converges on dSprites and MNIST datasets. We can observe that the actual output KL-divergence and $\beta(t)$ have a highly negative correlation, which means $g'(x) <0 $.

Next, we are going to verify the stability of the proposed DynamicVAE on MNIST and dSprites datasets. On MNIST dataset, its mapping function $g(x)$ in Fig.~\ref{fig:gx_fitting} (a) can be approximately obtained by curve fitting with the following negative exponential function:
\begin{equation}
g(x(t)) = 26.38 \exp(-0.0476 x(t)).
\end{equation}
And the corresponding derivative is
\begin{equation}
g'(x(t)) = -1.26 \exp(-0.0476 x(t)) \leq -1.26.
\end{equation}

In addition, we introduce how to obtain the hyperparameter $a$ in our dynamic model in Eq.~\eqref{eq:kl_discrete}. Assume that KL-divergence converges to a certain value $C'$ in the open loop control system during model training, then the dynamic model in Eq.~\eqref{eq:kl_discrete} can be rewritten as
\begin{equation}\label{eq:dynamic_model}
y(t)-y(t-1)+ay(t)=a C'.
\end{equation}
When $y(0)=0$, and the sampling period of our system is $T_s=1$, the corresponding solution is given by
\begin{equation}
y(t)=C'(1 - \exp(-at)).
\end{equation}
In order to obtain the value of $a$, one commonly used method in control theory is to set $a = \frac{1}{t^*}$ as we have $y(t^*)=C'(1 - \exp(-1)) \approx 0.632C'$. In this way, we can derive $a$ based on the training steps $t^*$ as KL-divergence reaches 63.2\% of its final value $C'$~\citep{isa2011comparative} in the experiments, as shown in Fig.~\ref{fig:mnist_response}. For MNIST dataset, we can get the hyperparameter $a=\frac{1}{5000}$ around based on the time response of KL-divergence in the open loop system, as shown in Fig.~\ref{fig:mnist_response} (a).

\begin{figure*}[!t]
\centering 
\subfigure[MNIST when $\beta=80$]{\includegraphics[width=0.49\textwidth]{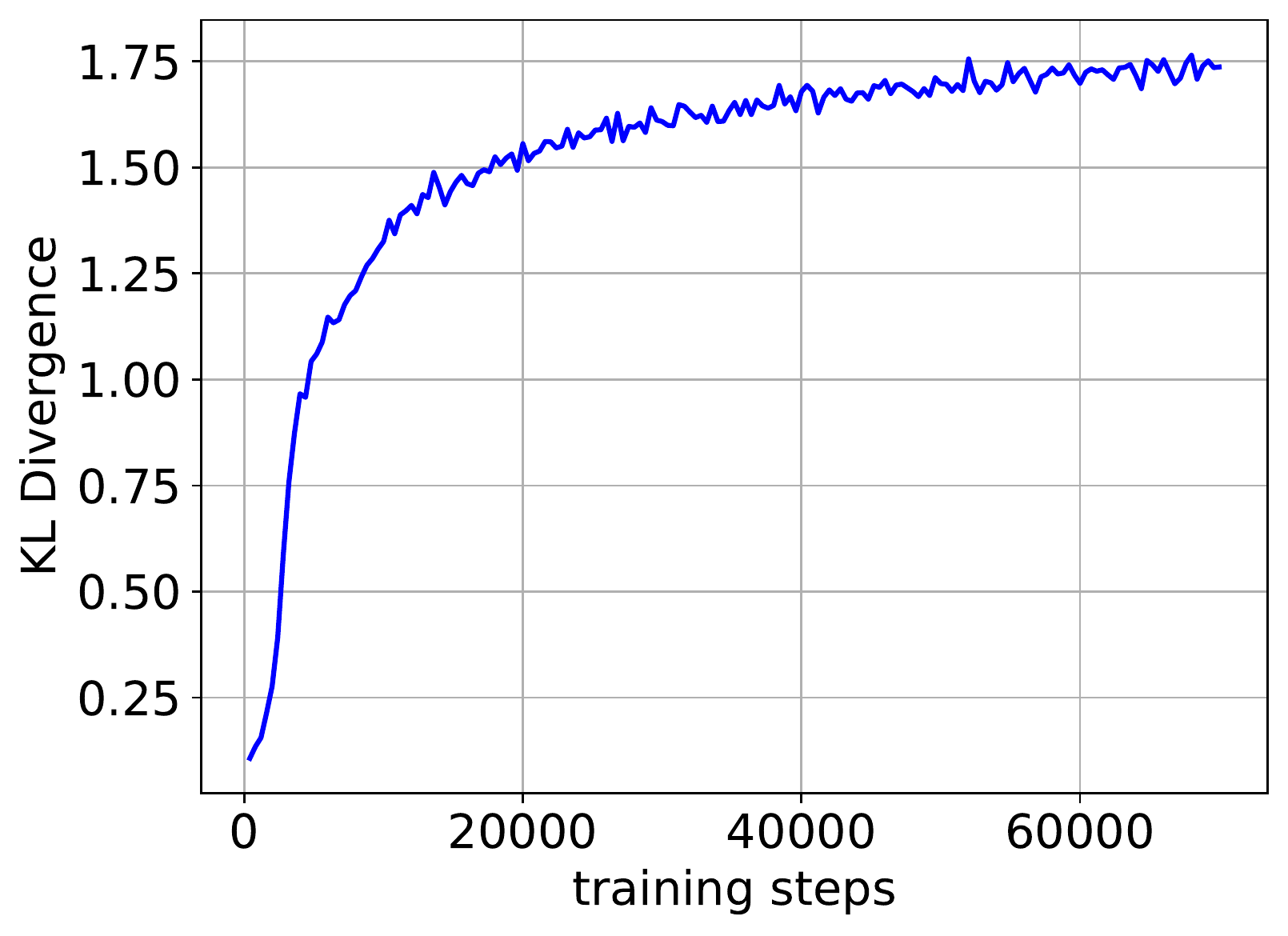}}
\subfigure[dSprites when $\beta=130$]{\includegraphics[width=0.49\textwidth]{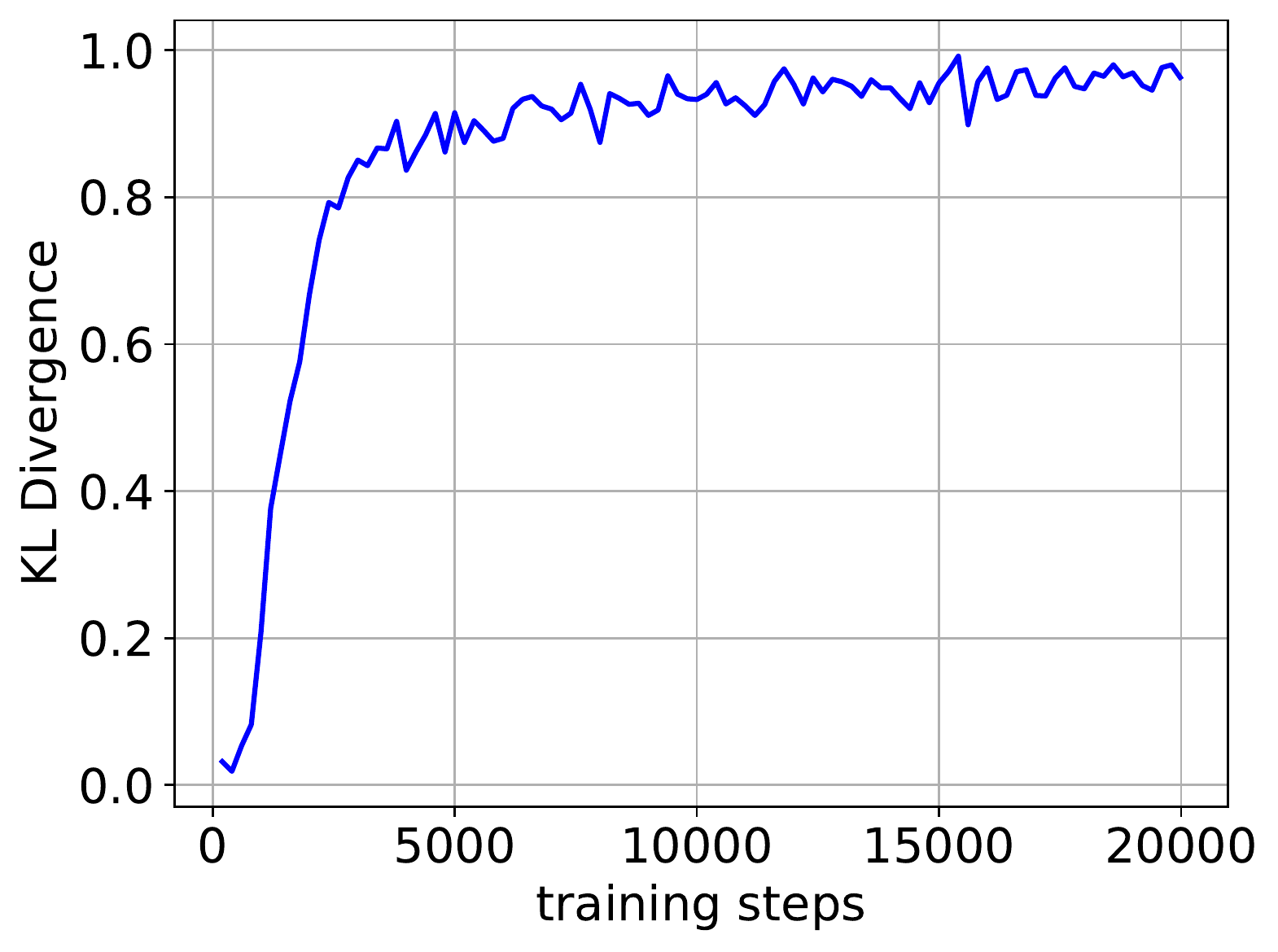}}
\vskip -0.05in
\caption{Time response of KL-divergence under different $\beta$ on MNIST and dSprites datasets respectively}
\label{fig:mnist_response}
\end{figure*}

Similarly, the derivative of mapping function on dSprites can be approximately expressed by
\begin{equation}
g'(x(t)) = -3.2 \exp(-0.121 x(t)) \leq -3.2.
\end{equation}
In addition, we can get the hyperparameter $a=\frac{1}{2500}$ around based on the time response of KL-divergence in the open loop system, as shown in Fig.~\ref{fig:mnist_response} (b).

We summarize the parameters $a$ and $g'(x(t))$ for different datasets in the following Table~\ref{tab:para_summary}.
\begin{table*}[!th]
\caption{Parameters summary for different datasets}
\label{tab:para_summary}
\begin{center}
\begin{tabular}{lll}
\toprule
Dataset & $a$ & $g'(x(t))_{min}$ \\
\midrule
MNIST & $\frac{1}{5000}$ & -1.26\\
dSprites & $\frac{1}{2500}$ & -3.2 \\
\bottomrule
\end{tabular}
\end{center}
\end{table*}

In this paper, we choose $K_p=0.01$ and $K_i = 0.005$ with the parameters in Table~\ref{tab:para_summary} to validate our model meets the conditions in Eq.~\eqref{eq:condition_final}. In addition, our experimental results in Section~\ref{sec:evaluate} further demonstrate that our method can stabilize the KL-divergence to the set points.

\section{Extra Experiments on MNIST}\label{app:extra_exp}
\subsection{Evaluation on Reconstruction Quality}
Fig.~\ref{fig:mnist_recon} shows the comparison of reconstruction loss and weight $\beta(t)$ for different methods. It can be observed that DynamicVAE and ControlVAE have comparable reconstruction accuracy to the basic VAE, but they have better disentanglement than it, as shown in Fig.~\ref{fig:mnist_trav_dynamic} and~\ref{fig:mnist_vae}. In addition, we can see that DynamicVAE has better reconstruction quality than the two $\beta$-VAE models.
\begin{figure*}[!thb]
\centering 
\subfigure[Reconstruction loss]{\includegraphics[width=0.47\textwidth]{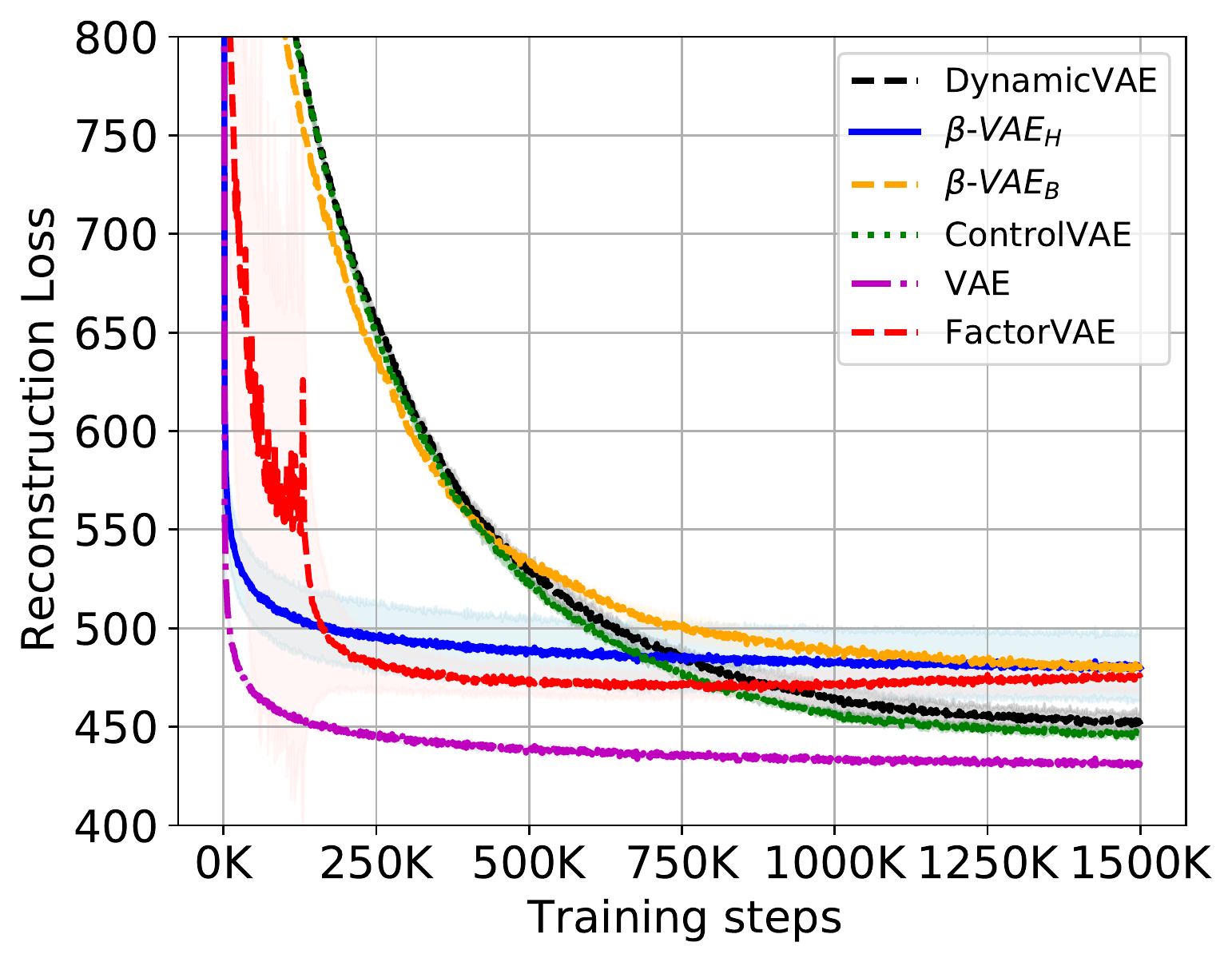}}
\subfigure[Weight]{\includegraphics[width=0.47\textwidth]{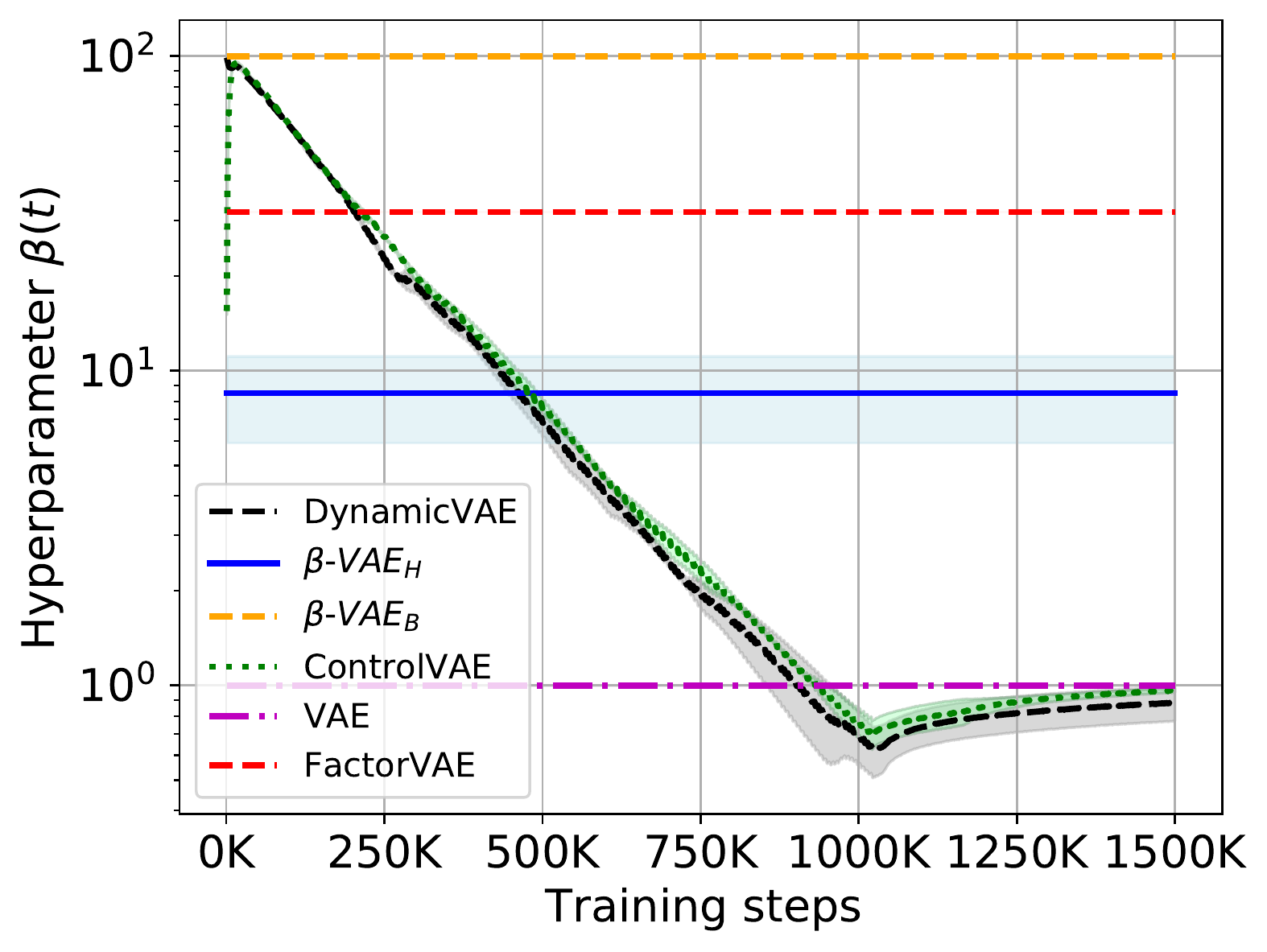}}
\vskip -0.05in
\caption{Performance comparison of different methods.}
\label{fig:mnist_recon}
\vskip -0.1in
\end{figure*}

\subsection{MNIST Latent Traversals for Baselines}
\label{app:traveral_baseline}
We present some samples of latent traversals for the baseline methods. We find that DynamicVAE outperforms ControlVAE in term of rotation factor as illustrated in Fig.~\ref{fig:mnist_trav_dynamic} and \ref{fig:trav_controlvae}, though they have comparable disentanglement score. Moreover, DynamicVAE has a better disentanglement than $\beta$-VAE and FactorVAE in the following figures.

\begin{figure*}[!th]
\centering 
\subfigure[Rotation]{\includegraphics[width=0.32\textwidth]{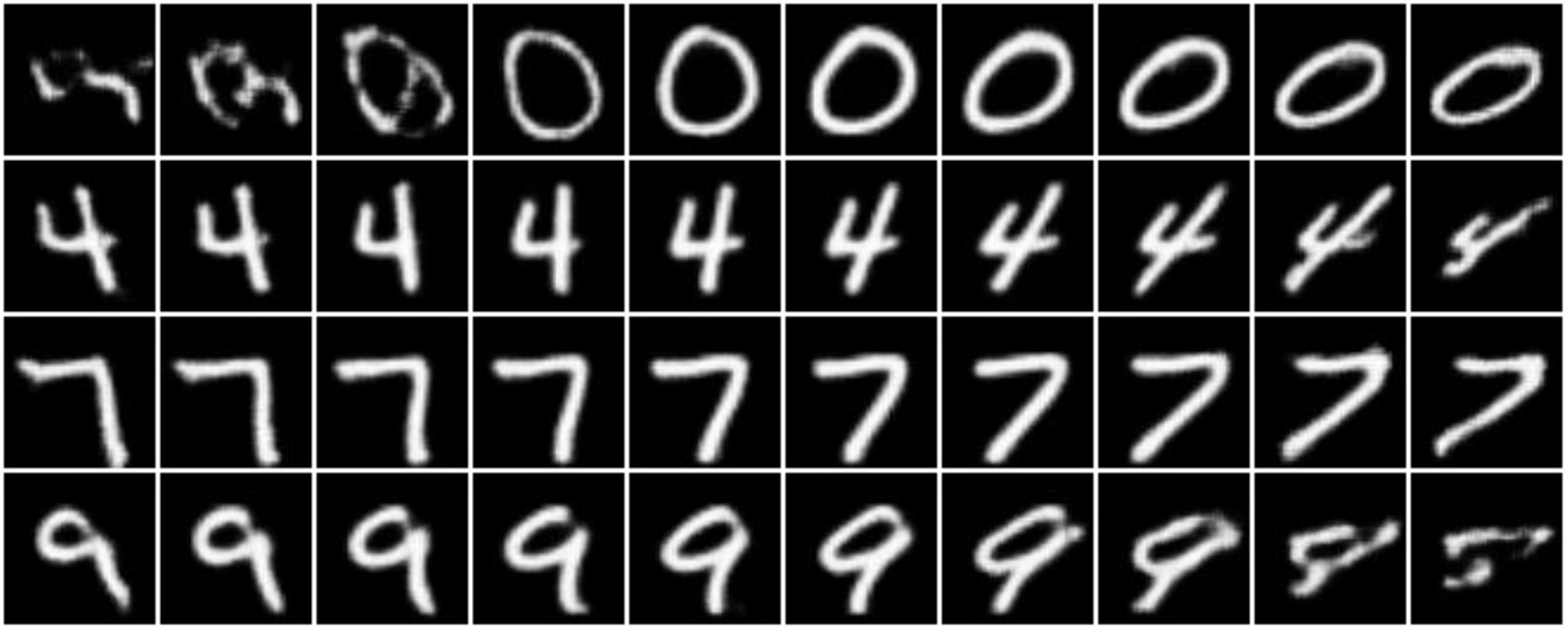}}
\subfigure[Thickness]{\includegraphics[width=0.32\textwidth]{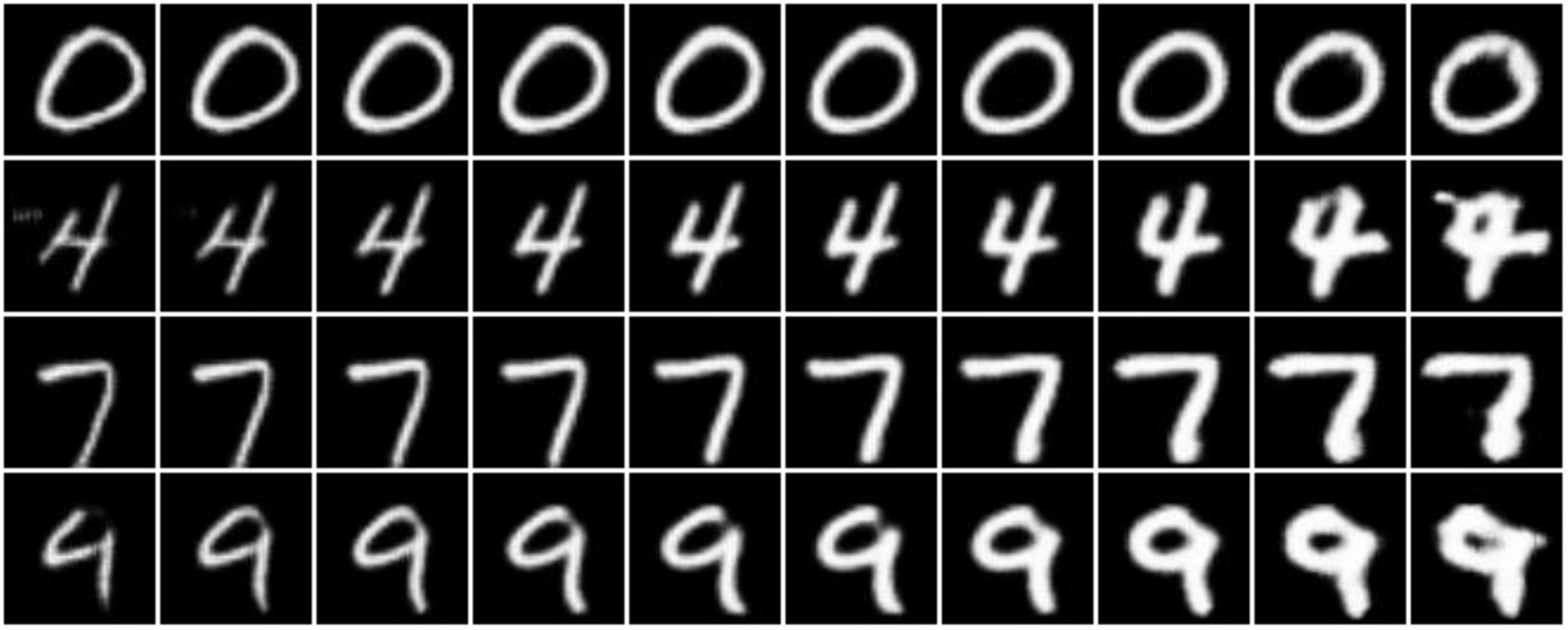}}
\subfigure[Style from left to right]{\includegraphics[width=0.32\textwidth]{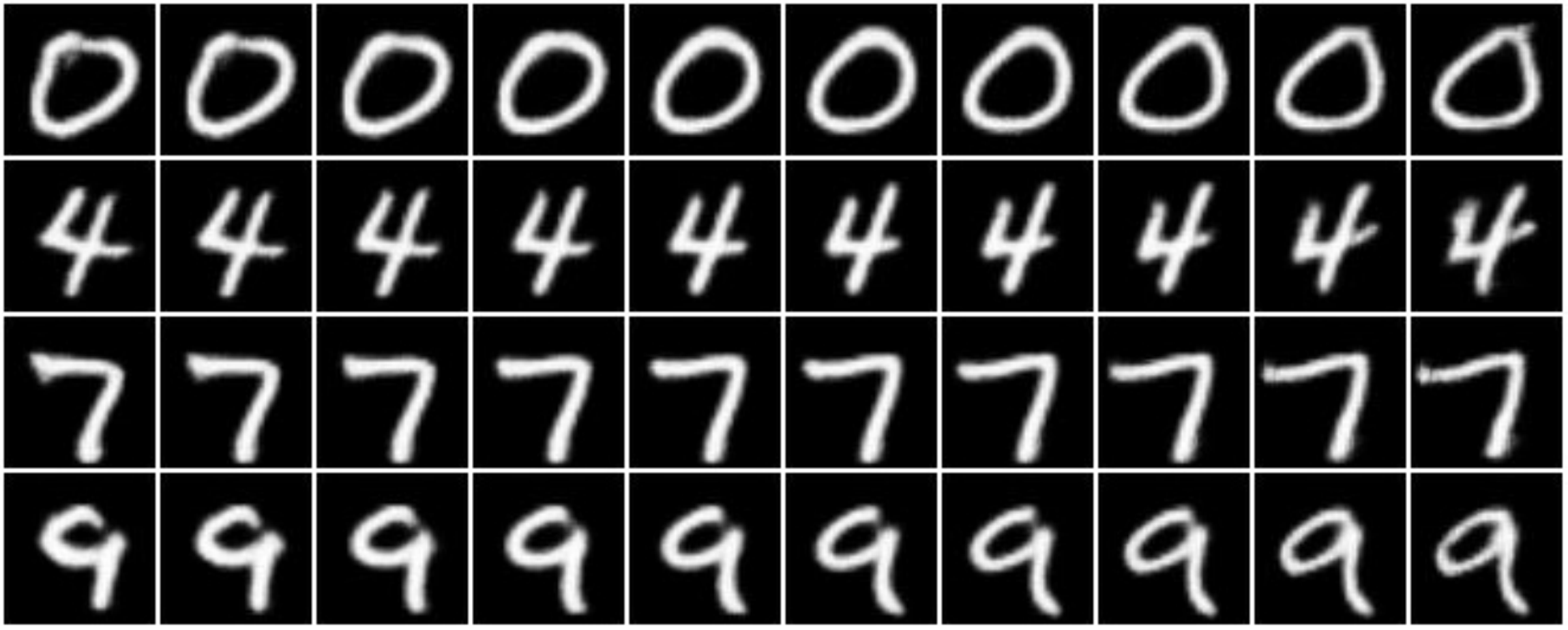}}
\vskip -0.05in
\caption{Latent traversals on MNIST for ControlVAE.}
\label{fig:trav_controlvae}
\vskip -0.1in
\end{figure*}

\begin{figure*}[!th]
\centering 
\subfigure[Rotation]{\includegraphics[width=0.32\textwidth]{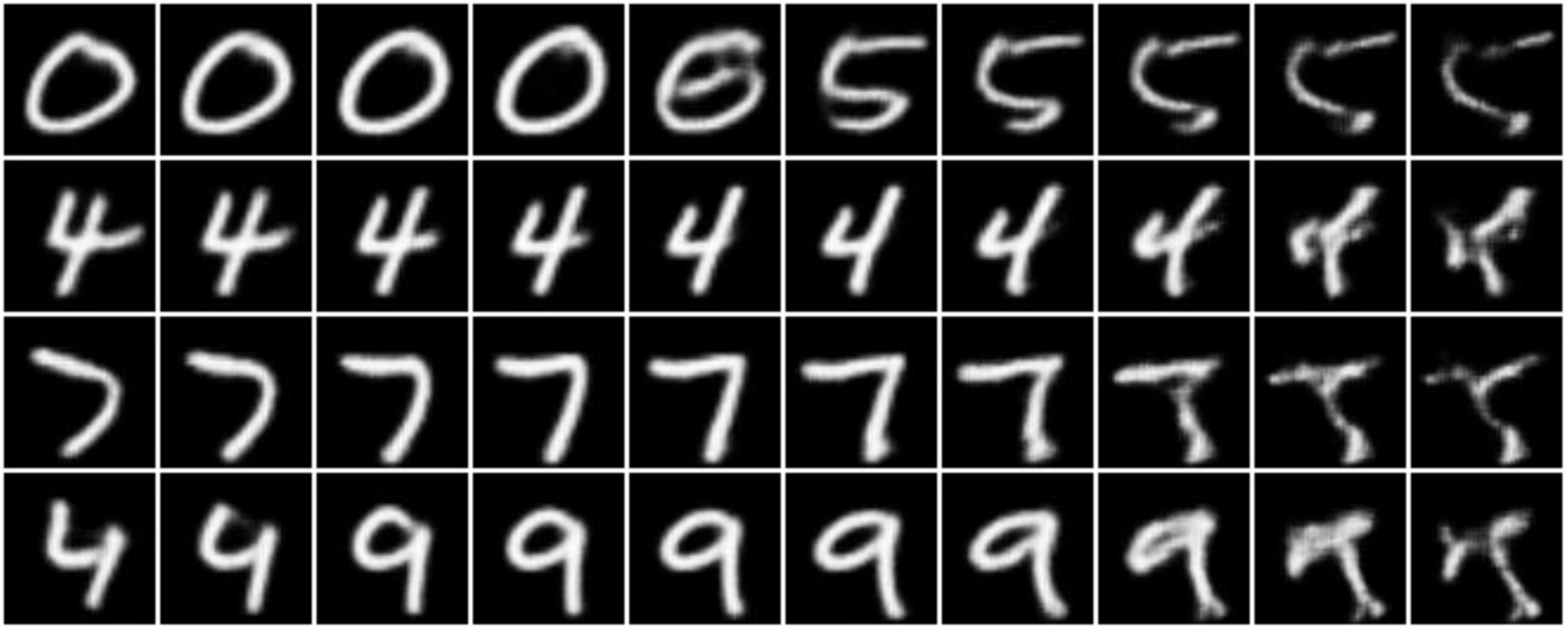}}
\subfigure[Thickness]{\includegraphics[width=0.32\textwidth]{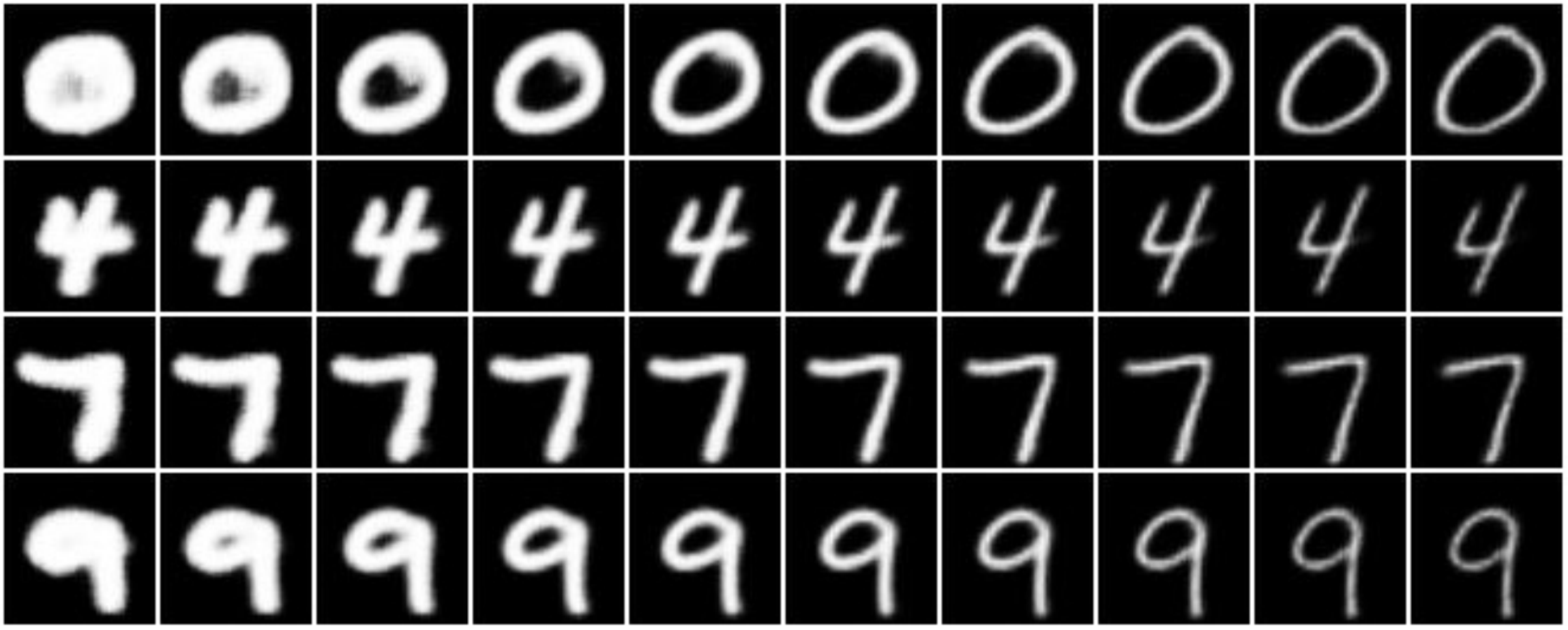}}
\subfigure[Style from left to right]{\includegraphics[width=0.32\textwidth]{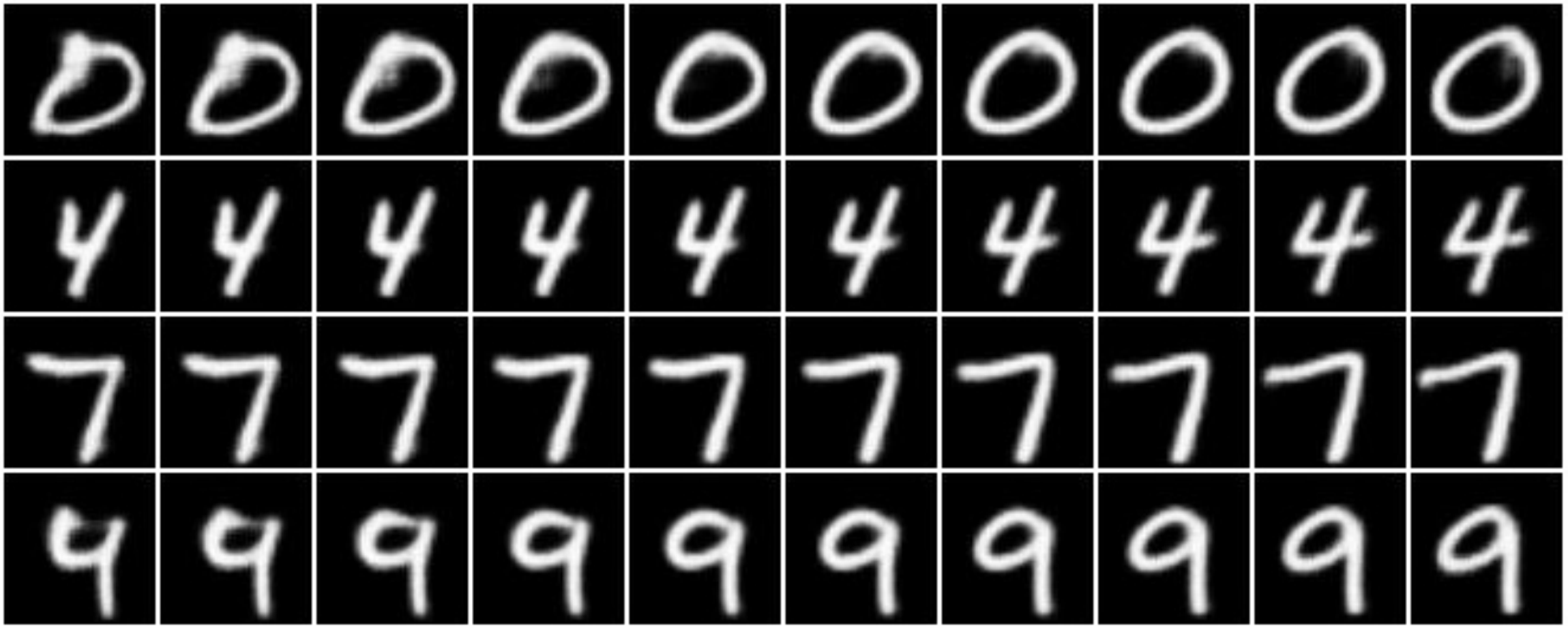}}
\vskip -0.05in
\caption{Latent traversals on MNIST for FactorVAE.}
\vskip -0.1in
\end{figure*}

\begin{figure*}[!th]
\centering 
\subfigure[Rotation]{\includegraphics[width=0.32\textwidth]{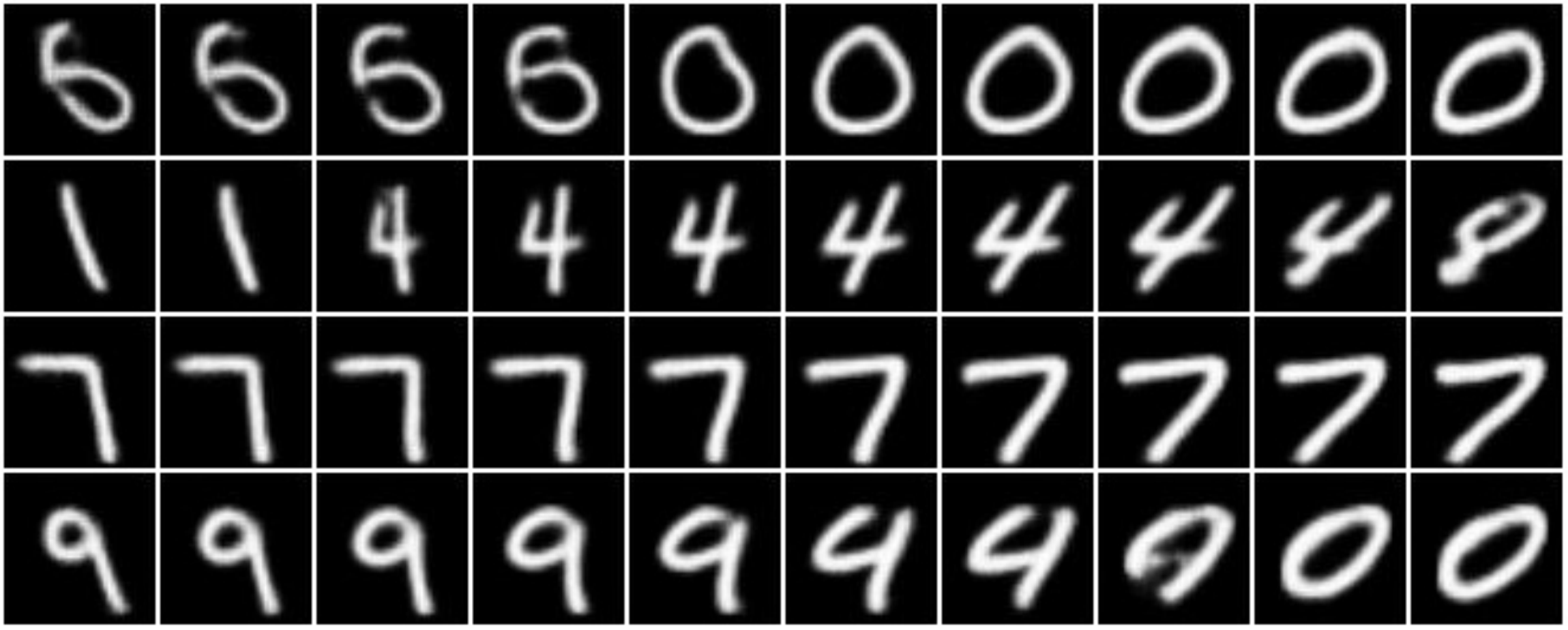}}
\subfigure[Thickness]{\includegraphics[width=0.32\textwidth]{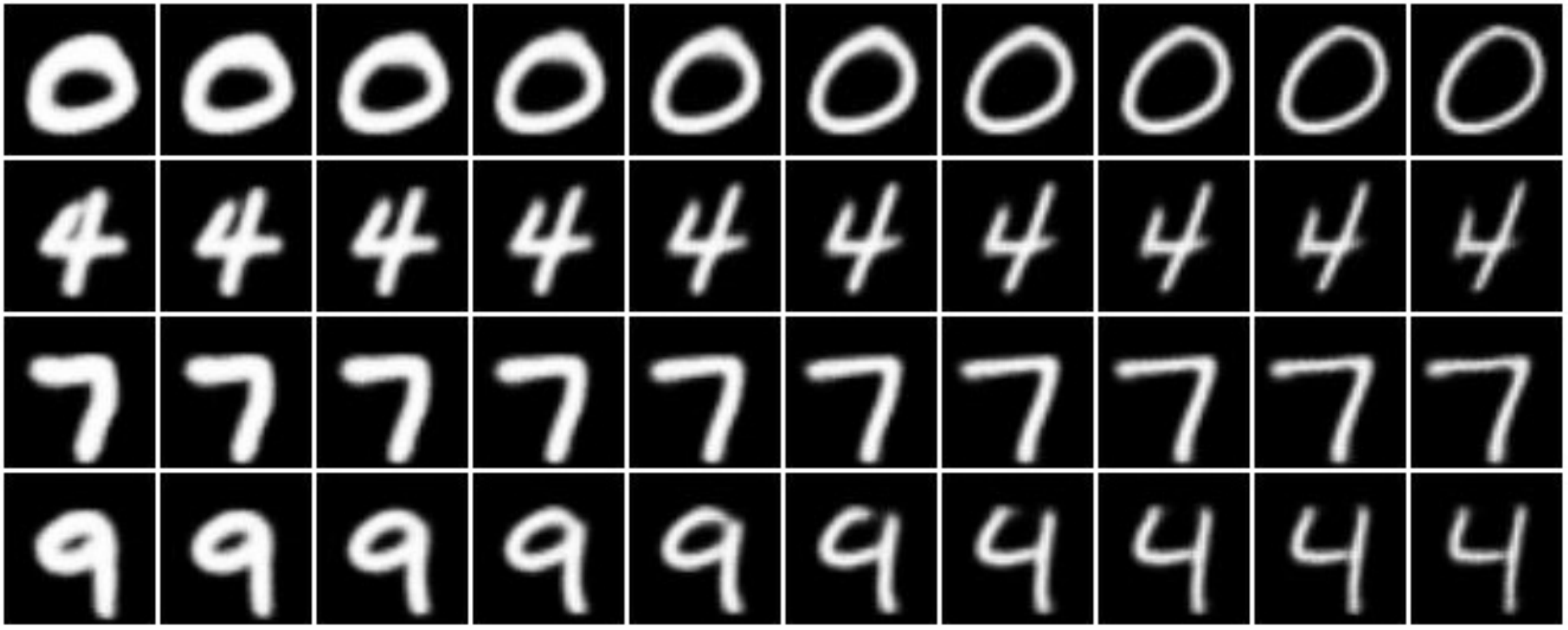}}
\subfigure[Style from left to right]{\includegraphics[width=0.32\textwidth]{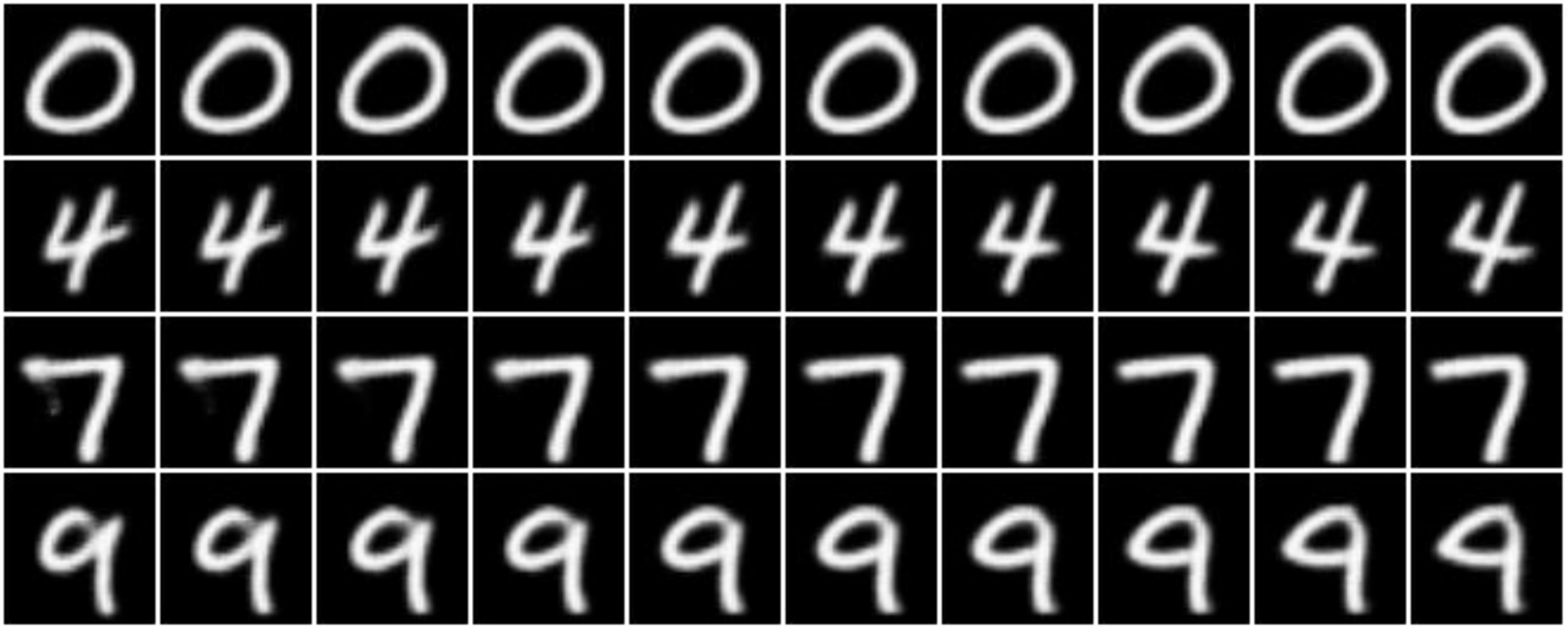}}
\vskip -0.05in
\caption{Latent traversals on MNIST for $\beta$-VAE$_H$ ($\beta=10$).}\label{fig:betaVAE_H_mnist}
\vskip -0.2in
\end{figure*}

\begin{figure*}[!hb]
\centering 
\subfigure[Rotation]{\includegraphics[width=0.32\textwidth]{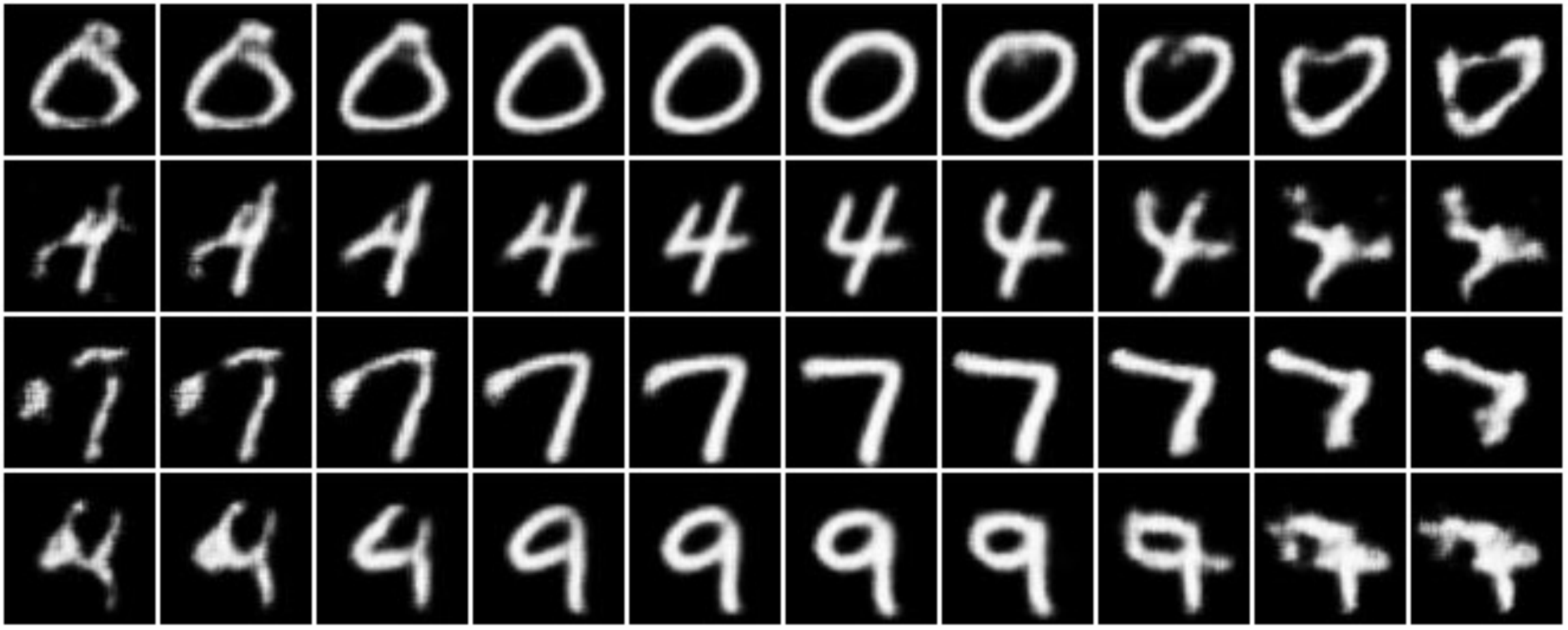}}
\subfigure[Thickness]{\includegraphics[width=0.32\textwidth]{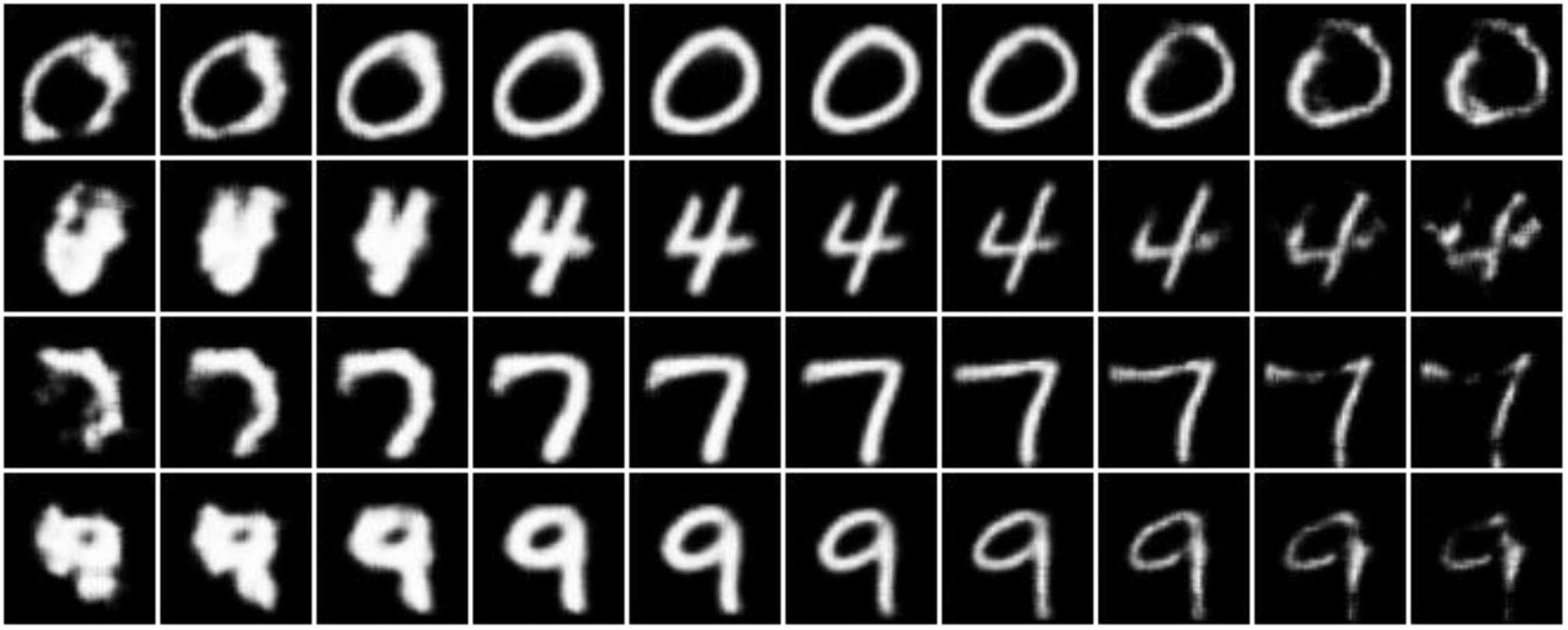}}
\subfigure[Style from left to right]{\includegraphics[width=0.32\textwidth]{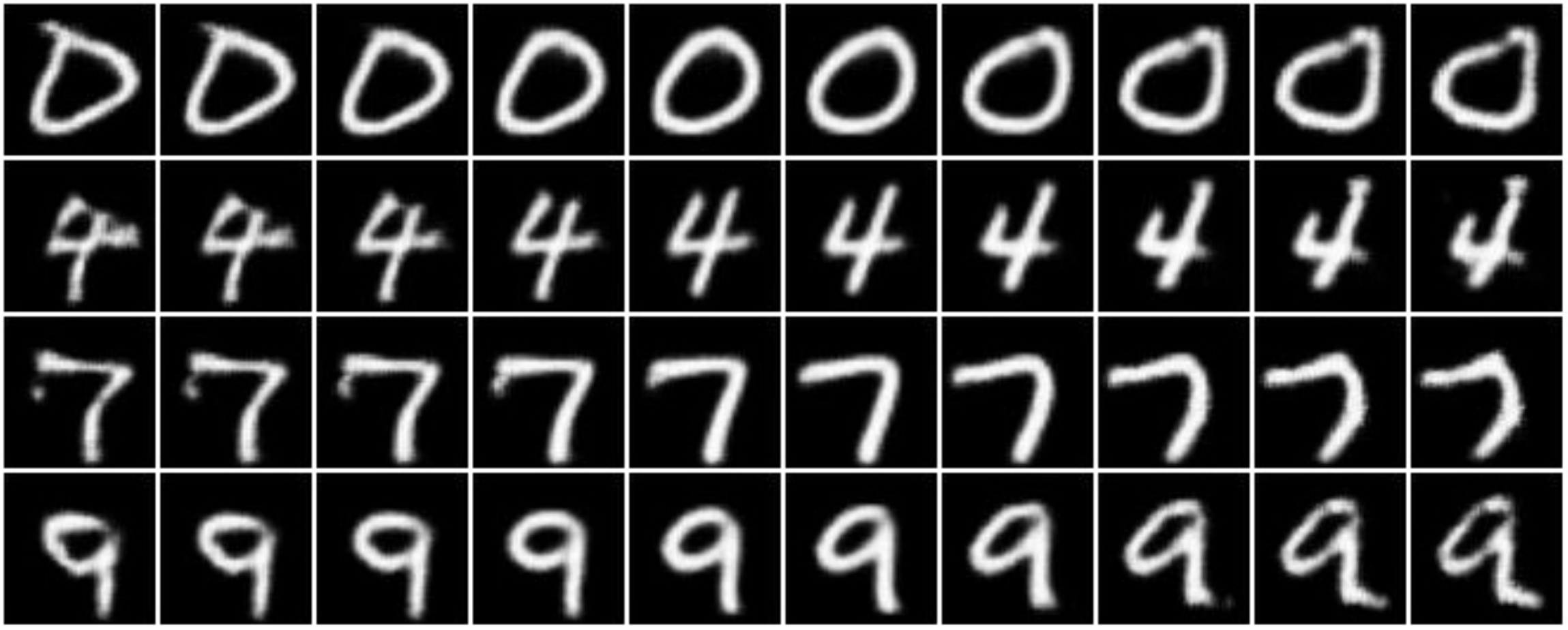}}
\vskip -0.05in
\caption{Latent traversals on MNIST for $\beta$-VAE$_B$ ($\gamma=100$).}\label{fig:betaVAE_mnist}
\vskip -0.1in
\end{figure*}

\begin{figure*}[!hb]
\centering 
\subfigure[Rotation]{\includegraphics[width=0.32\textwidth]{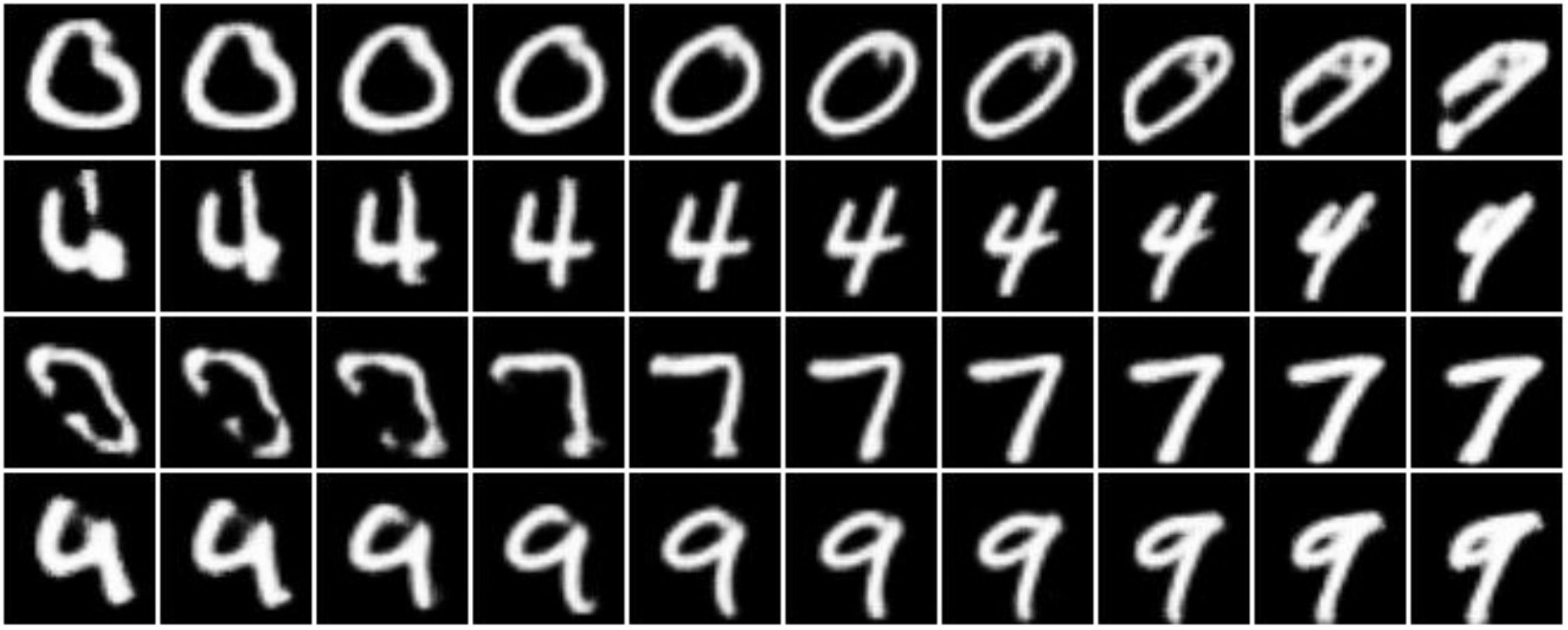}}
\subfigure[Thickness]{\includegraphics[width=0.32\textwidth]{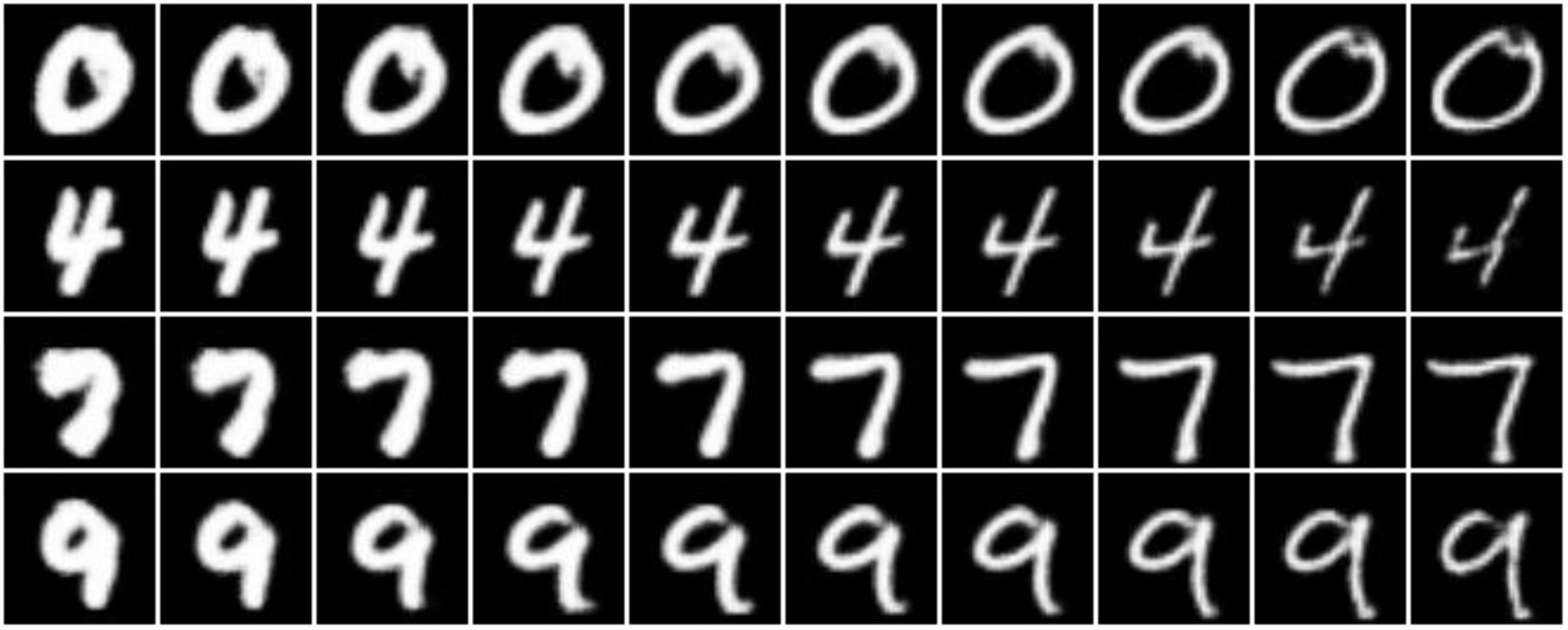}}
\subfigure[Style from left to right]{\includegraphics[width=0.32\textwidth]{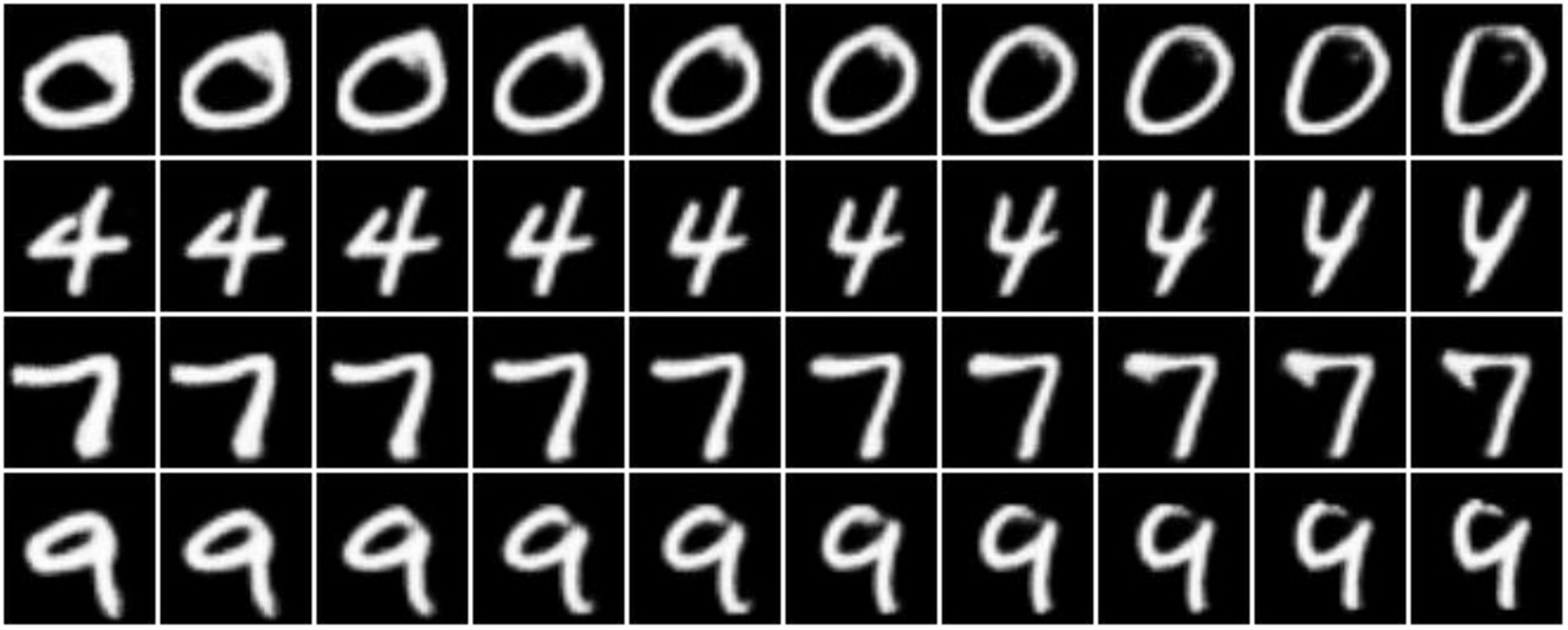}}
\vskip -0.05in
\caption{Latent traversals on MNIST for the basic VAE.}\label{fig:mnist_vae}
\vskip -0.1in
\end{figure*}

\clearpage

\end{document}